\newtheorem{proposition}{Proposition}
\newcommand{\xmark}{\ding{55}}%
\definecolor{cvprblue}{rgb}{0.21,0.49,0.74}
\title{Handbook on Leveraging Lines for Two-View Relative Pose Estimation
}
\author{Petr Hruby${}^1$
\and
Shaohui Liu${}^1$
\and
Rémi Pautrat${}^1$
\and
Marc Pollefeys${}^{1, 2}$
\and
Daniel Barath${}^1$
\and
${}^1$ \normalsize{Department of Computer Science, ETH Zurich}
\and
${}^2$ \normalsize{Microsoft Mixed Reality and AI Zurich lab}
}
\crefname{section}{Sec.}{Secs.}
\Crefname{section}{Section}{Sections}
\Crefname{table}{Table}{Tables}
\crefname{table}{Tab.}{Tabs.}
\newcommand*\matr{\mathbf}
\newcommand{\Proj}{\matr{P}}
\newcommand{\Hom}{\matr{H}}
\newcommand{\Fund}{\matr{F}}
\newcommand{\Ess}{\matr{E}}
\newcommand{\Intrinsic}{\matr{K}}
\newcommand*\Point{\mathbf{p}}
\newcommand*\Worldpoint{\mathbf{X}}
\newcommand*\Qoint{\mathbf{q}}
\newcommand{\Rot}{\matr{R}}
\newcommand{\Tran}{\mathbf{t}}
\newcommand{\Normline}{\mathbf{n}}
\newcommand{\Line}{\mathbf{l}}
\newcommand{\Worldline}{\mathbf{L}}
\newcommand{\RR}{\mathbb{R}}
\newcommand{\Vanishing}{\mathbf{v}}
\newcommand{\Dir}{\mathbf{d}}
\newcommand{\Plane}{\mathbf{\Pi}}
\begin{document}
\maketitle

\begin{abstract}
   We propose an approach for estimating the relative pose between calibrated image pairs by jointly exploiting points, lines, and their coincidences in a hybrid manner. 
We investigate all possible configurations where these data modalities can be used together and review the minimal solvers available in the literature. 
Our hybrid framework combines the advantages of all configurations, enabling robust and accurate estimation in challenging environments. 
In addition, we design a method for jointly estimating multiple vanishing point correspondences in two images, and a bundle adjustment that considers all relevant data modalities. 
Experiments on various indoor and outdoor datasets show that our approach outperforms point-based methods, improving AUC@10$^\circ$ by 1-7 points while running at comparable speeds.
The source code of the solvers and hybrid framework will be made public.

\end{abstract}

\vspace{-10pt}
\section{Introduction} \label{sec:intro}

Estimating the relative pose (\ie, rotation and translation) between an image pair is a fundamental problem both in computer vision and robotics that has numerous real-world applications, \eg, 
in 3D reconstruction \cite{DBLP:journals/cacm/AgarwalFSSCSS11, DBLP:conf/cvpr/BarathMESM21, DBLP:conf/cvpr/HeinlySDF15, DBLP:conf/cvpr/SchonbergerF16, DBLP:conf/cvpr/ZhuZZSFTQ18, DBLP:journals/ijcv/SnavelySS08}, 
visual localization \cite{DBLP:journals/ijrr/LynenZABHPSS20, DBLP:conf/eccv/PanekKS22, DBLP:conf/eccv/SattlerLK12, DBLP:conf/cvpr/SattlerMTTHSSOP18}, 
simultaneous localization and mapping \cite{DBLP:journals/corr/DeToneMR17, superpoint, DBLP:conf/eccv/EngelSC14, DBLP:journals/trob/Mur-ArtalMT15}, 
multi-view stereo \cite{DBLP:conf/iccv/ChenHXS19, DBLP:conf/cvpr/FurukawaCSS10, DBLP:journals/ftcgv/FurukawaH15, DBLP:conf/nips/KarHM17}, 
and visual odometry \cite{DBLP:conf/cvpr/NisterNB04, DBLP:journals/jfr/NisterNB06}.
In this paper, we focus on estimating the relative pose in a hybrid manner, jointly from 2D line and point correspondences and their coincidences. 
This allows for being robust to various indoor and outdoor scene characteristics, \eg, low-textured areas where lines tend to be more distinctive than points.

\begin{figure}
    \centering
    \includegraphics[width=\columnwidth,trim={0 25 200 5},clip]{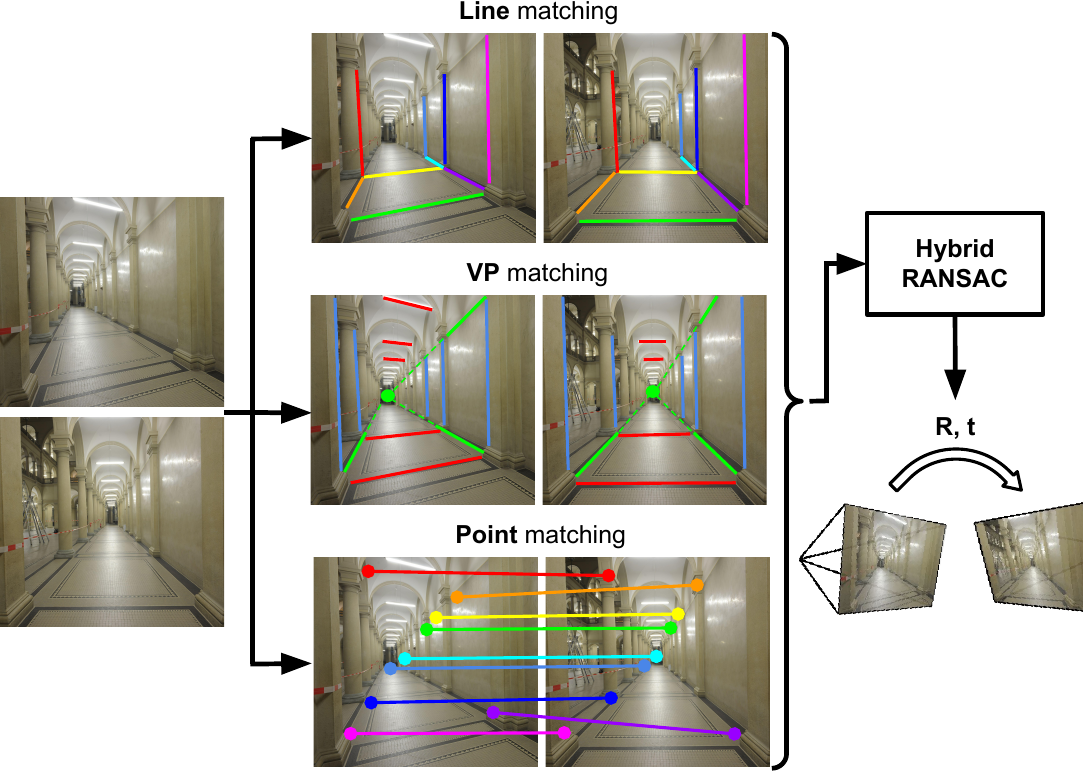}
    \caption{\textbf{Relative pose from points and lines.} We present all configurations to exploit point, line, and vanishing point correspondences for estimating the relative pose of two calibrated images. By combining the configurations within a hybrid RANSAC \cite{fischler1987} framework our approach can handle typical failure cases of the widely used 5-point solver~\cite{DBLP:conf/cvpr/Nister03}, \eg in low textured areas.}
    \label{fig:teaser}
\end{figure}

The traditional approach for estimating relative pose in two images involves detecting \cite{SIFT2004, DBLP:conf/nips/RevaudSHW19, DBLP:conf/nips/TyszkiewiczFT20, superpoint} and matching \cite{sarlin20superglue} local features to form tentative point correspondences. 
They are then fed into a robust estimator, such as RANSAC or one of its variants \cite{chum2003locally, DBLP:conf/cvpr/BarathNIM20, barath2018graph, DBLP:journals/pami/RaguramCPMF13, DBLP:conf/iccv/IvashechkinBM21}, to simultaneously find the sought relative pose and the matches consistent with it.
Although this point-based approach is still widely used and forms the cornerstone of many vision applications, it has certain weaknesses that deteriorate its accuracy in scenes dominated by homogeneous or repetitive regions.
This poses a challenge, especially in indoor scenes \cite{7scenes, dai2017scannet, Schops_2017_eth3d} that often contain low-textured areas, \eg walls, preventing to find distinctive features.
Repetitive structures also frequently appear in man-made environments, \eg windows on a facade, breaking the visual descriptor-based feature matching due to the implied ambiguity.  

Several alternative approaches have been proposed for relative pose estimation, including ones leveraging optical flow~\cite{DBLP:journals/ai/HornS81, DBLP:conf/iccv/DosovitskiyFIHH15}, or using features that contain richer information than simply the point coordinates~\cite{DBLP:journals/tip/BarathH18,barath2022relative}. 
While algorithms based on optical flow are widely used in SLAM pipelines~\cite{DBLP:journals/trob/Mur-ArtalMT15}, they assume a relatively small camera motion. Thus, they are not applicable to general relative pose estimation with cameras moving arbitrarily. 
Other methods exploit rich features, such as affine correspondences, to solve the problem with fewer matches than when using only points. 
This reduces the combinatorics of the robust estimation problem and can often improve both accuracy and runtime. 
However, these methods are also subject to the same weaknesses as point-based approaches in that they require features to be located on salient regions to estimate their affine shape accurately \cite{HesHarAff2004, AffNet2018, Mishkin2015MODS, DBLP:journals/ipol/YuM11}. 

Lines are known to be particularly useful, especially in low-textured areas, and are actively used for 3D localization \cite{abdellali2021l2d2,gao2021pose} or reconstruction using 2D-3D line matches \cite{DBLP:journals/cviu/BartoliS05,zuo2017robust,pumarola2017pl,gomez2019pl,wei2019real,lim2021avoiding,shu2022structure}. 
Also, a growing number of papers investigate their potential when having more than two images~\cite{DBLP:conf/iccv/DuffKLP19, DBLP:conf/eccv/DuffKLP20, DBLP:conf/cvpr/FabbriDFRPTWHGK20, breiding2022line, Geppert2020ECCV, Geppert2021CVPR, DBLP:conf/cvpr/HrubyDLP22}.
However, their use for relative pose estimation in a stereo setup is limited as corresponding 2D lines do not impose explicit constraints on the relative camera pose~\cite{DBLP:conf/iccv/DuffKLP19}. 
There are several works leveraging lines for two-view geometry estimation. 
Guerrero et al.~\cite{guerrero2001lines} estimate a homography from four collinear line correspondences by the well-known direct linear transformation. 
Elqursh et al.~\cite{DBLP:conf/cvpr/ElqurshE11} assumes a triplet of lines to be in a special configuration, allowing to estimate the relative camera rotation decoupled from the translation.

This paper investigates the configurations where points and lines can be used to estimate the relative pose between two calibrated views. 
Even though there are several solvers proposed throughout the years \cite{guerrero2001lines,DBLP:conf/cvpr/ElqurshE11,SalaunMM16} using lines and vanishing points to estimate relative pose, there is no comprehensive overview nor comparison of how such methods can be used in practice. 
We provide a list of the relevant point, vanishing point, and line configurations and review the minimal solvers available in the literature.
Benefiting from this knowledge, we develop a unified framework that simultaneously exploits multiple data modalities in a hybrid manner to provide robust and accurate results even in challenging environments. 
The contributions are: 
\begin{itemize}[noitemsep]
    \item We investigate \textit{all} relevant data configurations, where points, lines, and their coincidences (\eg, vanishing points and junctions) can be used together.
    \item We review the minimal solvers for the configurations available in the literature and provide an overview.
    \item We transform the configurations not available in the literature to the known problems to solve them.
    \item We develop a unified framework that simultaneously benefits from multiple feature types in a hybrid manner for estimating the relative pose.
    \item In addition, we provide proof that the constraint derived from coplanar lines is equivalent to using line junctions while leading to more stable solvers. 
    We propose a joint vanishing point estimation method between two images; and a local optimization algorithm that simultaneously optimizes over all data modalities. 
\end{itemize}
We demonstrate on several public, real-world, and large-scale datasets (both indoor and outdoor) that the proposed approach is superior to state-of-the-art methods relying only on point correspondences.

\section{Relative Pose from Point and Line Matches} \label{sec:relpose}

\noindent
In this section, we study the problem of calibrated relative pose estimation between two images from 2D point correspondences (PC), line correspondences (LC), and the vanishing points (VP) stemming from parallel lines.  
Point correspondences can come from line junctions \cite{DBLP:conf/cvpr/ElqurshE11}, endpoints, or from an off-the-shelf feature detector and matcher, \eg, SuperPoint~\cite{superpoint} with SuperGlue~\cite{sarlin20superglue} or LoFTR~\cite{sun2021loftr}. 
Vanishing points are extracted from the detected line matches prior to the relative pose estimation procedure. 

\subsection{Theoretical Background}

\noindent
Here, we describe the theoretical concepts used in the paper.

\textbf{Projection matrix} $\Proj_i \in \RR^{3\times4}$ of the $i$-th camera is decomposed as $\Proj_i = \Intrinsic_i [\Rot_i \ \Tran_i]$, where $\Intrinsic_i \in \RR^{3\times3}$ is the intrinsic matrix, and $\Rot_i \in \text{SO}(3), \Tran_i \in \RR^3$ represent the rotation and translation, respectively.
In case of having calibrated cameras, $\Proj_i$ can be simplified to $\Proj_i = [\Rot_i \ \Tran_i]$.

\textbf{Relative pose} $(\Rot, \Tran)$ between two cameras $\Proj_1, \Proj_2$ is obtained as $\Rot = \Rot_2 \Rot_1^\text{T}$, $\Tran = \Tran_2 - \Rot_2 \Rot_1^\text{T} \Tran_1$.
The \textbf{epipolar geometry} \cite{DBLP:books/cu/HZ2004} relates the relative pose $(\Rot, \Tran)$ and a homogeneous 2D point correspondence $(\Point, \Point') \in \RR^3 \times \RR^3$ (PC). Let $\Worldpoint \in \RR^3$ be a point in space, $\Point$ be its projection into $\Proj_1$, and $\Point'$ be its projection into $\Proj_2$. 
Projections $\Point, \Point'$ are related by the epipolar constraint \cite{DBLP:books/cu/HZ2004} as ${\Point'}^\text{T} \Fund \Point = 0$,
where $\Fund$ is a fundamental matrix relating $\Proj_1$, $\Proj_2$. If the cameras are calibrated, the constraint is simplified to ${\Point'}^\text{T} \Ess \Point = 0$,
where $\Ess$ is the essential matrix relating cameras $\Proj_1$ and $\Proj_2$. The essential matrix $\Ess$ is decomposed as $[\Tran]_{\times} \Rot$. Then, the epipolar constraint is written as
\begin{equation}
    {\Point'}^\text{T} [\Tran]_{\times} \Rot \Point = 0 \label{eq:epipolar_pts}.
\end{equation}
Equation \eqref{eq:epipolar_pts} imposes one constraint on the relative pose $\Rot, \Tran$. Since the scale cannot be observed, the relative pose has five degrees of freedom, and it can be estimated from five point correspondences \cite{DBLP:conf/cvpr/Nister03}.

\textbf{Homography} relates planes projected into cameras $\Proj_1$ and $\Proj_2$. 
Let $\Plane$ be a 3D plane, and $\Worldpoint \in \Plane$ be a point on plane $\Plane$. 
Its projections $\Point$, $\Point'$ into $\Proj_1$, $\Proj_2$ are related by
%
    $\Point' \sim \Hom \Point$,
%
where $\Hom \in \RR^{3\times3}$ depends only on $\Proj_1$, $\Proj_2$, and $\Plane$.
Similarly, let $\Worldline_1 \subset \Plane$ be a line in its implicit form on plane $\Plane$. 
Its projections $\Line$, $\Line'$ into $\Proj_1$, $\Proj_2$ are related by
\begin{equation}
    \Line \sim \Hom^\text{T} \Line' \label{eq:hom_line}.
\end{equation}
We can estimate $\Hom$ from 4 coplanar line corrs.\ (LC) \cite{DBLP:books/cu/HZ2004}. 

\textbf{Vanishing point (VP)} is an intersection of 2D projections of parallel 3D lines. The homogeneous coordinates of vanishing point $\Vanishing_i$ in camera $j$ are $\Vanishing_i \sim \Intrinsic_j \Rot_j \Dir_i,$
where $\Dir_i \in \RR^3$ is the direction of the $i$-th line in 3D. If the camera is calibrated, then this formula is simplified as
\begin{equation}
    \Vanishing_i \sim \Rot_j \Dir_i \label{eq:VP_calibrated}.
\end{equation}
Let us have 2 calibrated cameras $\Proj_1 = [\Rot_1 \ \Tran_1]$, $\Proj_2 = [\Rot_2 \ \Tran_2]$. Vanishing points $\Vanishing$ in $\Proj_1$, $\Vanishing'$ in $\Proj_2$ are related by
\begin{equation}
    \Vanishing'  \sim \Rot_2 \Rot_1^\text{T} \Vanishing = \Rot \Vanishing \label{eq:van_sim},
\end{equation}
where $\Rot = \Rot_2 \Rot_1^\text{T} \in \text{SO}(3)$ is the relative rotation between $\Proj_1$, $\Proj_2$. Note that, if $\Vanishing$, $\Vanishing'$ are normalized, there must hold:
\begin{equation}
    \begin{split}
        \Vanishing' = \Rot \Vanishing \quad  \text{or} \quad \Vanishing' = -\Rot \Vanishing.
    \end{split} \label{eq:vanishing}
\end{equation}
A single vanishing point correspondence (VC) gives two constraints on rotation $\Rot$. 
Two VCs fully determine $\Rot$, and a third VC does not give additional constraints on the calibrated relative pose estimation.


\subsection{Possible Configurations}
\label{sec:possible_configs}
Considering the constraints described in the previous section, the number of distinct configurations of points, vanishing points, and lines orthogonal to them for estimating the relative pose is limited.
For their summary, see Fig.~\ref{fig:solvers_overview}. 
All other configurations can be traced back to these or do not provide additional constraints for the relative pose.

\begin{table}[]
    \centering
    \small
    \begin{tabular}{cccc}
        \toprule
        VPs & LC$\perp$VP & PC generic & PC coplanar \\
        \midrule
        0 & N/A & 5 & 0\\
        0 & N/A & 0 & 4\\
        1 & 0 & 3 & 0\\
        1 & 1 & 2 & 0\\
        2 & 0 & 2 & 0\\
        \bottomrule
    \end{tabular}
    \caption{\textbf{Overview of relevant configurations} using point correspondences (PC), vanishing points (VP), and line correspondences (LC) orthogonal to them. Each row corresponds to one family of configurations. PC and LC can be used interchangeably under the conditions of Section \ref{sec:possible_configs}.}
    \label{tab:configurations}
\end{table}

\begin{figure}
    \setlength{\tabcolsep}{4pt}
    \small
    \centering
    \begin{tabular}{c c c c c}
         \includegraphics[width=0.14\linewidth]{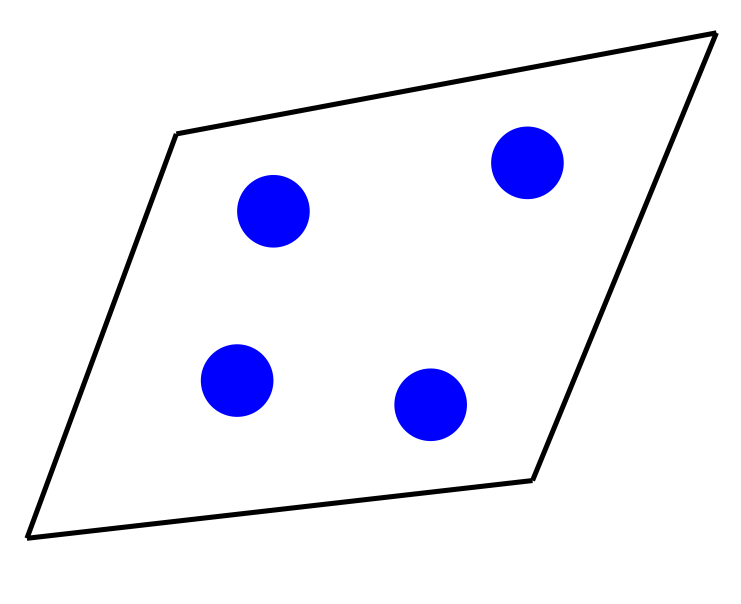} &
         \includegraphics[width=0.14\linewidth]{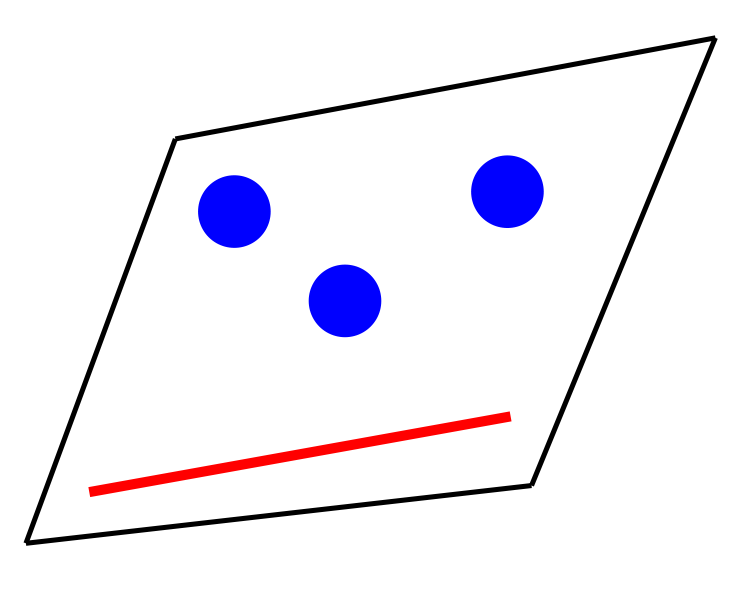} &
         \includegraphics[width=0.14\linewidth]{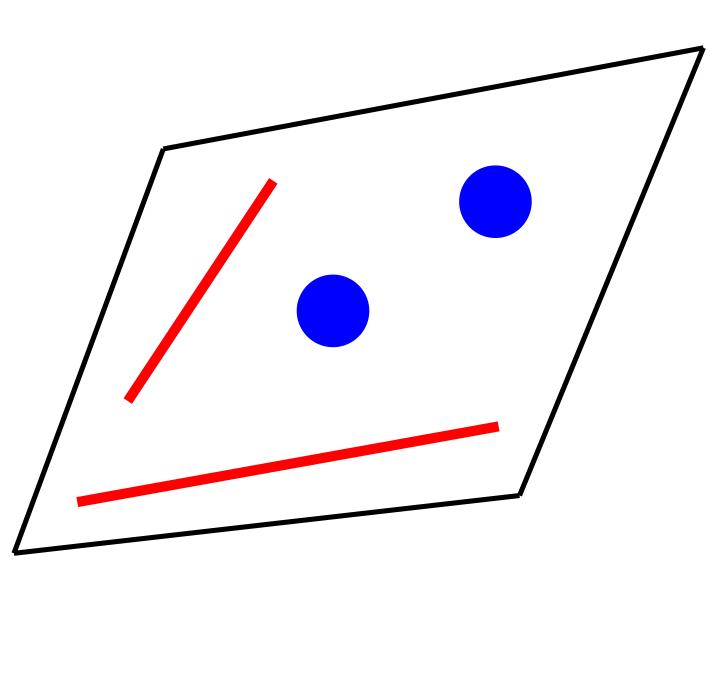} &
         \includegraphics[width=0.14\linewidth]{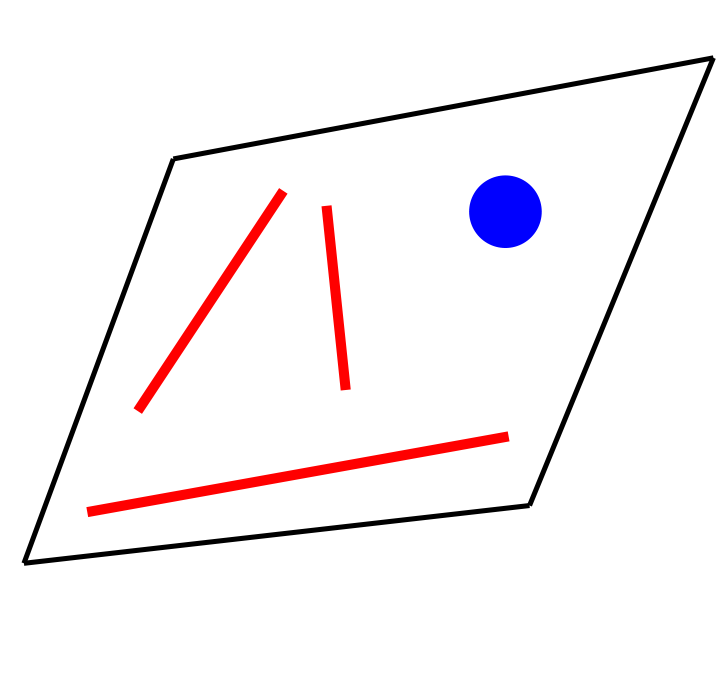} &
         \includegraphics[width=0.14\linewidth]{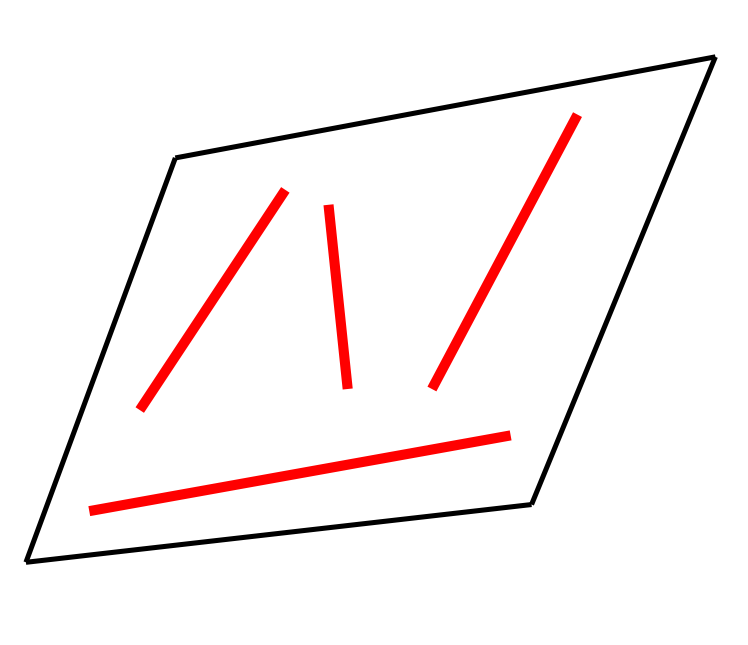} \\
         4-0-0 \cite{DBLP:books/cu/HZ2004} & 3-1-0 \cite{DBLP:books/cu/HZ2004} & 2-2-0 \cite{DBLP:books/cu/HZ2004} & 1-3-0 \cite{DBLP:books/cu/HZ2004} & 0-4-0 \cite{DBLP:books/cu/HZ2004} \\
    \end{tabular}\vspace{2mm}
         
    \begin{tabular}{c c c c}
        \includegraphics[width=0.2\linewidth]{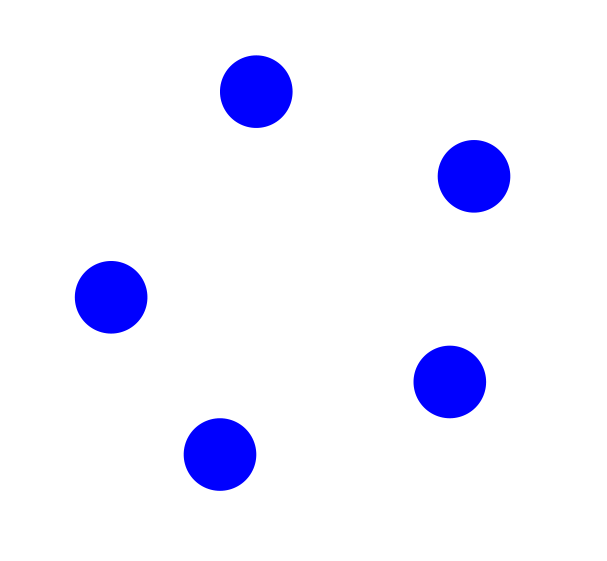} &
        \includegraphics[width=0.2\linewidth]{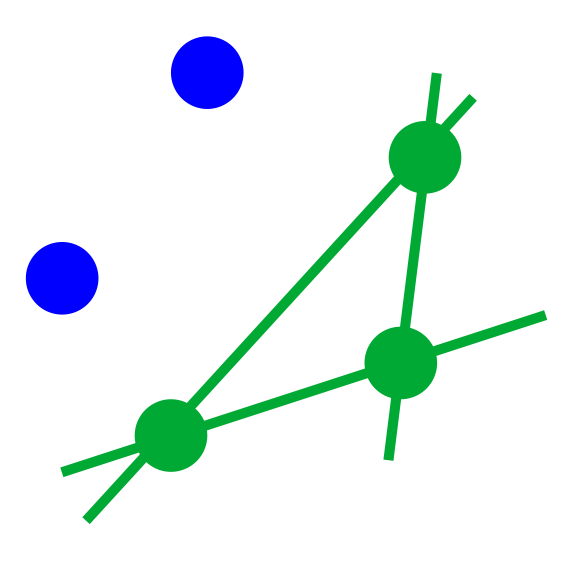} &
        \includegraphics[width=0.2\linewidth]{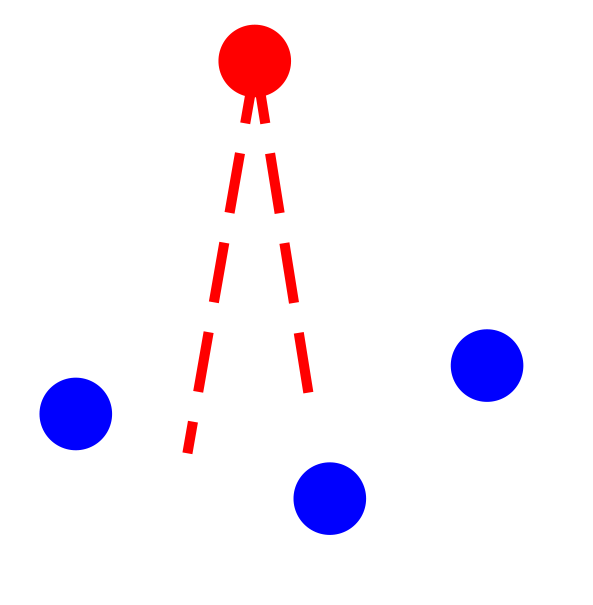} &
        \includegraphics[width=0.2\linewidth]{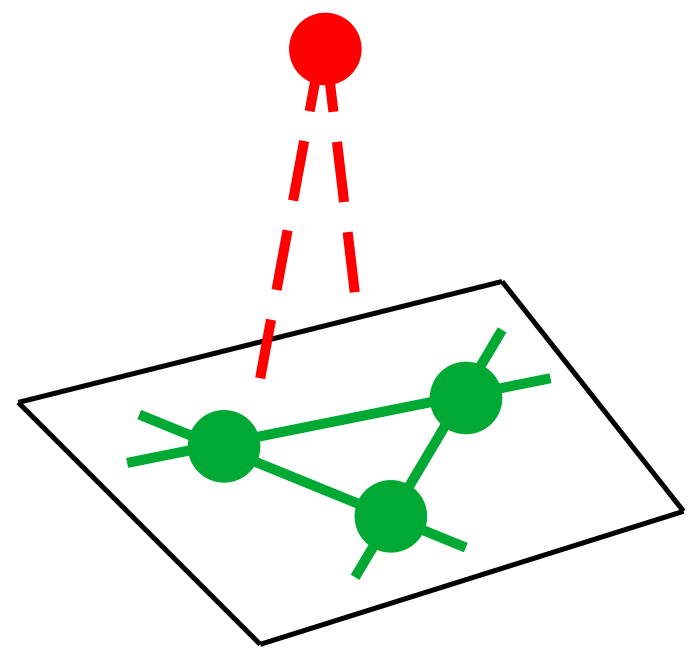}\\
        \\
        5-0-0 \cite{DBLP:conf/cvpr/Nister03} & 2-3-0 & 3-0-1 & 0-3-1 \\

        \includegraphics[width=0.2\linewidth]{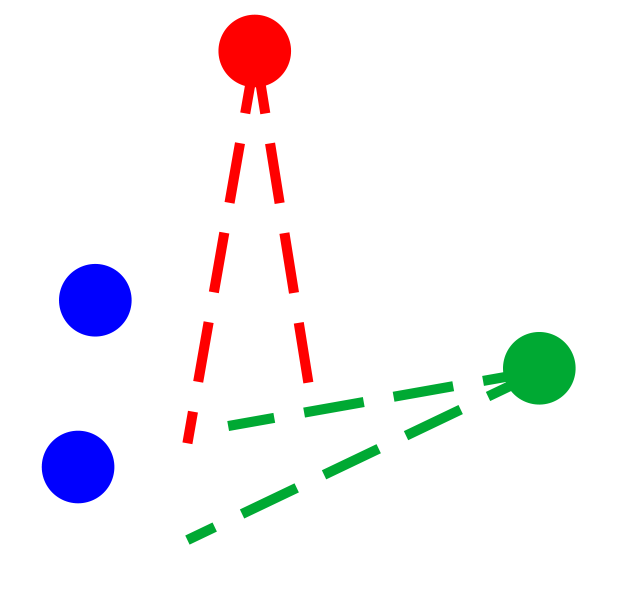} &
        \includegraphics[width=0.2\linewidth]{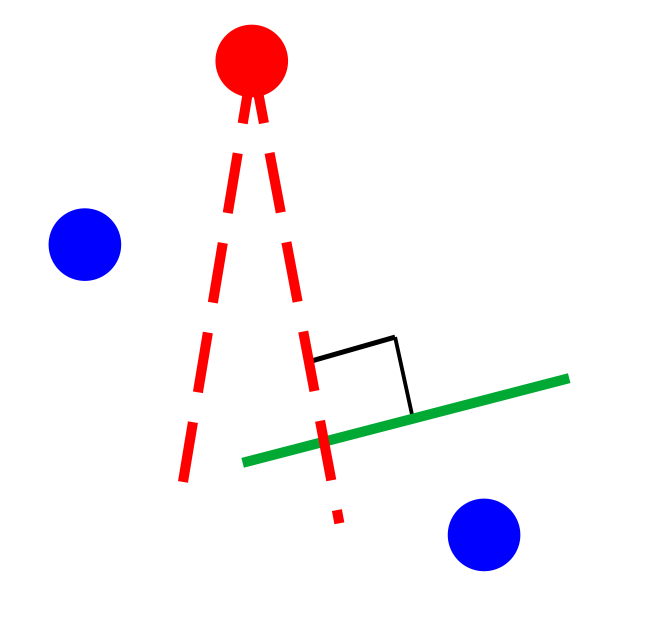} & \includegraphics[width=0.2\linewidth]{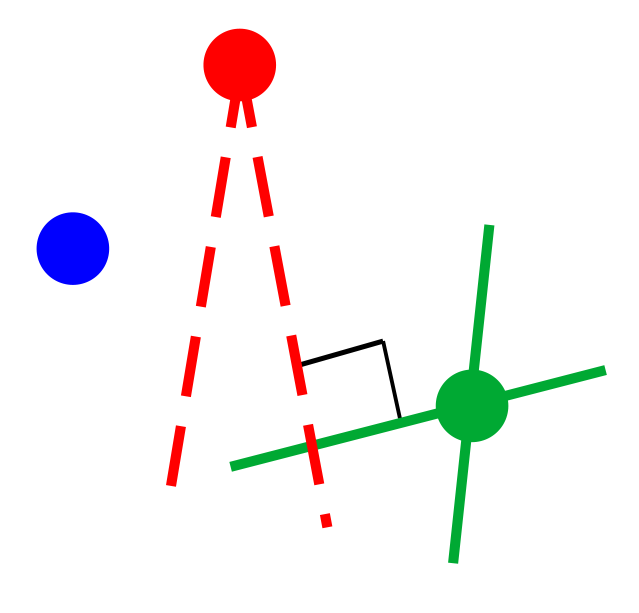} & 
        \includegraphics[width=0.2\linewidth]{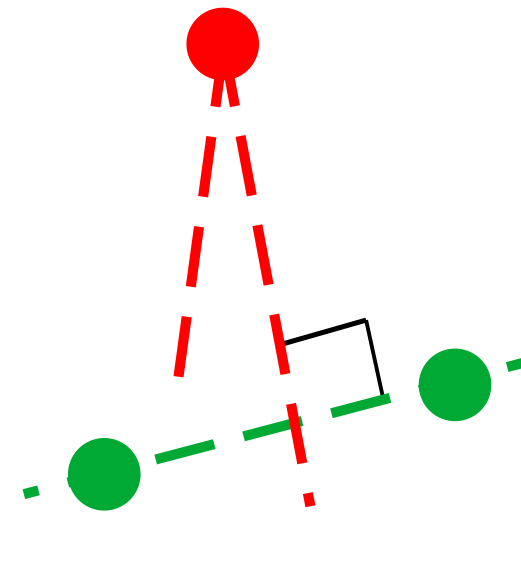}\\ 
        2-0-2 & 2-1-1$^\perp$ \cite{DBLP:conf/cvpr/ElqurshE11} & 1-2-1$^\perp$ & 2-0-1$^{\perp}$ \\
    \end{tabular}
    \caption{\textbf{Overview of the relevant solvers} showing configurations of points, lines, and vanishing points relevant to calibrated relative pose estimation. Configuration X-Y-Z: number of X points, Y lines, and Z vanishing points. }
    \label{fig:solvers_overview}
\end{figure}

~\\
\noindent
\textbf{Discussions on the completeness.}
We give here a high-level explanation that the list of configurations is complete. 
The full proof is provided in Sec.~A in the supp.\ mat.
The relative pose has 5 degrees of freedom (DoF) \cite{DBLP:conf/cvpr/Nister03}.
The possible configurations can only have $0$, $1$, or $2$ VPs.
While 1 VP fixes 2 DoF, and 2 VPs fix 3 DoF \cite{SalaunMM16}, a third VP does not provide additional constraints.
Moreover, one line orthogonal to a VP can create a second VP \cite{DBLP:conf/cvpr/ElqurshE11}; in the case of 2 VPs, a line orthogonal to one of them does not provide any new information.

A point correspondence fixes 1 DoF \cite{DBLP:books/cu/HZ2004}. Four coplanar points fix 5 DoF due to the homography constraint \cite{DBLP:books/cu/HZ2004}. 
Since $n<4$ points are always coplanar, their coplanarity does not add any new constraints. 

We use these facts to obtain the possible configurations of points, vanishing points, and lines orthogonal to them.
Overall, the configurations can be clustered into five distinct categories, summarized in Table~\ref{tab:configurations}.

\vspace{1mm}
\noindent
\textbf{Obtaining all configurations.}
To obtain more configurations, points can be replaced by lines with the following rules, where we adopt the solver notations of Fig. \ref{fig:solvers_overview}:
\begin{itemize}[noitemsep]
    \item 3 PCs can be replaced by \textit{3 coplanar lines}. Configuration 2-3-0 can thus be obtained from 5-0-0, and configuration 0-3-1 from 3-0-1.
    \item If we have 4 coplanar points, we can replace each of them with a line \cite{DBLP:books/cu/HZ2004}. Thus, the 4-0-0 configuration yields four additional ones: 3-1-0, 2-2-0, 1-3-0, 0-4-0.
    \item One PC can be replaced with an intersection of two lines.    
    We prove in Sec.~B of supp.\ mat.\ that using constraints implied by coplanar lines is equivalent to using their junctions as corresponding points. 
    \item 2-0-1$^{\perp}$, 2-1-1$^{\perp}$, and 1-2-1$^{\perp}$ belong to the same family.
\end{itemize}
In summary, the 5 categories in Table~\ref{tab:configurations} yield the 13 configurations of Fig.~\ref{fig:solvers_overview} that are relevant to the problem.

\subsubsection{Existing Solvers}\label{sec:existing}

Some of the previously listed problems have already been discussed in the literature and solved.
Such configurations are the 5 PC solver~\cite{DBLP:conf/cvpr/Nister03, DBLP:conf/icpr/LiH06, STEWENIUS2006284, DBLP:conf/bmvc/KukelovaBP08}, the 4 PC, 4 LC, and combined homography solvers \cite{DBLP:books/cu/HZ2004}, the 2-1-1$^{\perp}$ solver \cite{DBLP:conf/cvpr/ElqurshE11}, and the 2-0-2 solver \cite{SalaunMM16}.
Configuration 2-3-0 is solved by 5 PC solver~\cite{DBLP:conf/cvpr/Nister03} after replacing the junctions with points. Configurations 3-0-1 and 0-3-1 are solved by transforming to the 3 point upright relative pose problem \cite{DBLP:conf/eccv/FraundorferTP10, DBLP:journals/jmiv/KalantariHJG11, Gallier12, DBLP:conf/3dim/SweeneyFT14}.
We use these as off-the-shelf solvers in our experiments.
In the next sections, we review the minimal problems that have not been mentioned in the literature, and transform them to previously solved problems.

\subsubsection{Pose from 1VC and 3PCs (3-0-1)}\label{sec:1vp3pt}


We show here how to calculate the relative pose $\Rot$, $\Tran$ from one vanishing point correspondence and three point correspondences.
Suppose that we are given a vanishing point match $(\Vanishing, \Vanishing')$, and three point correspondences $(\Point_1, \Point_1')$, $(\Point_2, \Point_2')$, $(\Point_3, \Point_3')$, normalized by the camera intrinsics.

First, we are going to use the vanishing point to constrain rotation $\Rot$. The corresponding vanishing points are related by \eqref{eq:vanishing} which provides two systems of equations $\Vanishing'_1 = \Rot \Vanishing_1$ and $-\Vanishing'_1 = \Rot \Vanishing_1$.
If $\Rot \in \text{SO}(3)$ is a valid rotation matrix, it satisfies at least one of these systems. Each one is of form
\begin{equation}
    \mathbf{x}' = \Rot \mathbf{x}, \label{eq:13_general}
\end{equation}
where $\lVert \mathbf{x} \rVert = \lVert \mathbf{x}' \rVert = 1$. Estimating $\Rot$ and $\Tran$ with a constraint in form \eqref{eq:13_general} is similar to estimating the pose with known gravity \cite{DBLP:journals/jmiv/KalantariHJG11, DBLP:conf/eccv/FraundorferTP10, DBLP:conf/3dim/SweeneyFT14}.
We can resolve the sign ambiguity by checking the order of the lines, from which the VP was obtained. Based on \eqref{eq:13_general}, we can decompose $\Rot$ as:
\begin{equation}
    \Rot = {\Rot'}_{\mathbf{x}}^\text{T} \Rot_y \Rot_{\mathbf{x}}, \label{eq:13_rot_decomposition}
\end{equation}
where $\Rot_{\mathbf{x}}$ is a rotation that brings vector $\mathbf{x}$ to $y$-axis, ${\Rot}_{\mathbf{x}}'$ brings $\mathbf{x}'$ to $y$-axis, and $\Rot_y$ is rotation around the $y$-axis.

Let $\mathbf{b}_2 = [0 \ 1 \ 0]^\text{T}$ denote the $y$-axis direction. We find rotation $\Rot_{\mathbf{x}}$ using the Rodrigues formula as
\begin{equation*}
    \Rot_{\mathbf{x}} = \textbf{I} + \sin \alpha_x [\mathbf{a}_\mathbf{x}]_{\times} + (1-\cos \alpha_x) [\mathbf{a}_\mathbf{x}]_{\times}^2,
\end{equation*}
where $\alpha_x = \arccos \mathbf{x}^\text{T} \mathbf{b}_2$ is the angle between vector $\mathbf{x}$ and the $y$-axis, and $\mathbf{a}_\mathbf{x} = (\mathbf{x} \times \mathbf{b}_2) / \lVert \mathbf{x} \times \mathbf{b}_2 \rVert$ is the normalized cross product of $\mathbf{x}$ and the $y$-axis. We find rotation ${\Rot}_{\mathbf{x}}'$ in an analogous way.

Now, we are going to find rotation $\Rot_y$ and translation $\Tran$ from the point correspondences. From \eqref{eq:epipolar_pts}, there holds
\begin{equation}
    {\Point'_i}^\text{T} [\Tran]_{\times} {\Rot'}_{\mathbf{x}}^\text{T} \Rot_y(\varphi) \Rot_{\mathbf{x}} {\Point_i} = 0, \ i \in \{1,2,3\}, \label{eq:13_epipolar}
\end{equation}
where
\begin{equation*}
    \Rot_y(\varphi) = \begin{bmatrix}
        \cos \varphi & 0 & -\sin \varphi\\
        0 & 1 & 0\\
        \sin \varphi & 0 & \cos \varphi\\
    \end{bmatrix}.
\end{equation*}
The essential matrix $[\Tran]_{\times} {\Rot'}_{\mathbf{x}}^\text{T} \Rot_y(\varphi) \Rot_{\mathbf{x}}$ from \eqref{eq:13_epipolar} is equal to ${\Rot'}_{\mathbf{x}}^\text{T} [\Tran']_{\times} \Rot_y(\varphi) \Rot_{\mathbf{x}}$ where $\Tran' = {\Rot'}_{\mathbf{x}} \Tran$. We calculate $\Qoint_i = \Rot_{\mathbf{x}} \Point_i$, $\Qoint'_i = \Rot'_{\mathbf{x}} \Point'_i$, and convert system \eqref{eq:13_epipolar} to:
\begin{equation*}
    {\Qoint'_i}^\text{T} [\Tran']_{\times} \Rot_y(\varphi) {\Qoint'_i} = 0, \ i \in \{1,2,3\}.
\end{equation*}
This is the problem of estimating the relative pose with upright rotation from $3$ point correspondences. We solve this with the method from \cite{DBLP:conf/3dim/SweeneyFT14}, which reduces the problem to a $4$-degree univariate polynomial. 
Note that the straightforward approach would yield a $6$-degree polynomial. Finally, we compose $\Rot = {\Rot'}_{\mathbf{x}}^\text{T} \Rot_y(\varphi) \Rot_{\mathbf{x}}$, and $\Tran = {\Rot'}_{\mathbf{x}}^\text{T} \Tran'$. 
%
%
%
%

\subsubsection{Pose from 1VC and 3LCs (0-3-1)}\label{sec:031}

\noindent
Here, we discuss how to calculate the relative pose $\Rot, \Tran$ from one vanishing point correspondence and three correspondences of coplanar lines.
Suppose that we are given a corresponding vanishing point pair $(\Vanishing_1, \Vanishing_1')$ and three coplanar line correspondences $(\Line_1, \Line_1')$, $(\Line_2, \Line_2')$, $(\Line_3, \Line_3')$.
The line matches can be pairwise intersected to generate three corresponding point pairs as $\Point_i = \Line_i \times \Line_j$ and $\Point'_i = \Line'_i \times \Line'_j$, where $(i, j) \in \{(1,2), (1,3), (2,3)\}$.
We can use them together with the vanishing point correspondence to calculate the relative pose according to Sec.~\ref{sec:1vp3pt}.

\subsubsection{Pose from 1VC, 1PC, and 2LCs (1-2-1$^\perp$)}\label{sec:201_orthogonal}

If one of the lines $\Line_1$ is orthogonal to the direction of vanishing point $\Vanishing_1$, and the two lines intersect at $\Point_2 = \Line_1 \times \Line_2$, we can use line $\Line_1$ as the line $\Line$ together with $\Vanishing_1$, $\Point_1$, $\Point_2$, and find the relative pose according to the 2-1-1$^\perp$ solver \cite{DBLP:conf/cvpr/ElqurshE11}.

\subsubsection{Pose from 1VC, and 2PCs (2-0-1$^\perp$)}\label{sec:201_orthogonal_new}

If the line passing through the points $\Point_1$, $\Point_2$ is orthogonal to the direction of the vanishing point $\Vanishing_1$, we can use the line $\Line = \Point_1 \times \Point_2$ together with $\Vanishing_1$, $\Point_1$, $\Point_2$, and find the relative pose according to the 2-1-1$^\perp$ solver \cite{DBLP:conf/cvpr/ElqurshE11}.

\section{Joint VP Estimation and Matching} \label{sec:vp_matching}

In this work, we need to compute the vanishing points of pairs of images $I$ and $I'$, and to find the association between them. 
Instead of separately estimating vanishing points in both images and then matching them, we propose to jointly match and detect them at the same time. 

We first detect and match lines in two images using any existing line matcher~\cite{zhang2013lbd,Pautrat_Lin_2021_CVPR,syoon_2021_linetr,pautrat2023gluestick} and discard all the lines that are left unmatched. 
Given this association, we apply a multi-model fitting algorithm, \eg \cite{Barath_2019_ICCV}, to jointly detect the VPs in both images. 
To do so, we define a minimal solver, used inside \cite{Barath_2019_ICCV}, that gets $m = 2$ line-to-line correspondences as input, and returns the implied VP match.
Given line pairs $(\Line_1, \Line_2)$ in $I$, and $(\Line_1', \Line_2')$ in $I'$, this means that the corresponding VPs are calculated as $\Vanishing = \Line_1 \times \Line_2$ and $\Vanishing' = \Line_1' \times \Line_2'$.
For inlier counting in \cite{Barath_2019_ICCV}, we use the orthogonal distance in pixel, proposed in \cite{Tardif09}, to compute the distance from a VP to a line. 
A line match is considered inlier if its orthogonal distance is smaller than the inlier threshold in both images. 
Finally, we run the Levenberg-Marquardt numerical optimization~\cite{lma} on the inliers of each VP pair.

\section{Hybrid RANSAC on Points and Lines}

Now, we have a variety of solvers that utilize line, vanishing point and point (or junction) correspondences to estimate the relative pose.
However, it is unclear which solver works the best in practice -- there may not exist a best solver that works similarly well on all real-world scenarios.
The accuracy of a particular solver depends on the structure of the underlying scene and the configurations of geometric entities. 
For example, point features may be enough to recover relative poses for well-textured image pairs, while they fail completely in case of lack of distinctive texture. 
Thus, we aim to adaptively employ all solvers covered in this paper within a hybrid RANSAC framework~\cite{camposeco2018hybrid} to combine their advantages in a data-dependent manner. 

As proposed in \cite{camposeco2018hybrid}, at each iteration of RANSAC, we first sample a minimal solver with respect to a probability distribution computed from the prior distribution and the inlier ratios of the corresponding geometric entities of each solver. 
Then, we sample a minimal set corresponding to the selected solver and solve for the relative pose.
%
%
%
The termination criterion is adaptively determined for each solver similarly as in \cite{camposeco2018hybrid}, depending on the inlier threshold of the corresponding geometric entities and the predefined confidence parameter. 
As the correctness of the line correspondences cannot be verified from the estimated relative pose, we pre-set the line inlier ratio to be 0.6 for computing the probability distribution to sample the minimal solver for each iteration.
In our experiments, we set the prior probability to be uniform across all the solvers.
Finally, a Ceres-based non-linear optimization refines the estimated relative pose, minimizing the reprojection error on the point correspondences and the vanishing point error given the estimated rotation. 


\section{Experiments}

\subsection{Synthetic tests}


\noindent
\textbf{Numerical stability.} 
%
First, we generate a random rotation matrix $\Rot_\text{GT}$, and a translation vector $\Tran_\text{GT}$.
To generate a PC, we sample a point $\matr{X} \in \RR^{3}$ from a Gaussian distribution with mean $[0, \ 0, \ 5]^\text{T}$ and standard deviation $1$. We project $\matr{X}$ into the first camera as $\matr{p}$ and into the second one as $\matr{q}$.

To generate a LC in direction $\Dir$, we sample a 3D point $\matr{X}_A$ and a parameter $\lambda \in \RR$. We construct the second point as $\matr{X}_B = \matr{X}_A + \lambda \Dir$. 
We project these points into both images to get projections of the 3D line.
%
To generate vanishing point $\Vanishing_i$, we sample a direction $\Dir_i$. From $\Dir_i$, we generate two parallel 3D lines and project them into the images. Vanishing points $\Vanishing_i$ and $\Vanishing'_i$ are obtained as the intersections of the projected 2D lines. 
%
To generate a line orthogonal to a VP in direction $\Dir_i$, we sample a random direction $\Dir_0$, get direction $\Dir = \Dir_i \times \Dir_0$ orthogonal to $\Dir_i$, and sample a LC in direction $\Dir$.
%
To generate a tuple of $k$ coplanar lines, we generate $2k$ coplanar 3D points and use them as the endpoints. See the supplementary material for details. 


Let $\Rot_\text{est}$, $t_\text{est}$ be the rotation and translation estimated by a solver. 
We calculate the rotation error as the angle of the rotation represented as ${\Rot_\text{est}}^\text{T} \Rot_\text{gt}$, and the translation error as the angle between vectors $t_\text{est}$ and $t_\text{gt}$. 
Hence, we generated $n=100000$ random problem instances and ran the solvers on the noiseless samples.
Figure \ref{fig:stability_tests} shows histograms of pose errors on a representative subset of the solvers, all of which are stable -- there is no peak close to zero.

\noindent
\textbf{Tests with noise.} To evaluate the robustness of our solvers with respect to the input noise, we generate minimal problems similarly as in the previous section, and perturb the input with artificial noise.
Namely, we set the focal length $f=1000$, and add noise $\frac{\sigma}{f}$ to each calibrated endpoint.

To simulate the effect of junctions obtained from noisy lines, we generate two directions $\Dir_1$, $\Dir_2$, and four parameters $\lambda_1, \lambda_2, \lambda_3, \lambda_4 \in \RR$. Then, we get four endpoints $\matr{X}_1 = \matr{X} + \lambda_1 \Dir_1$, $\matr{X}_2 = \matr{X} - \lambda_2 \Dir_1$, $\matr{X}_3 = \matr{X} + \lambda_3 \Dir_2$, $\matr{X}_4 = \matr{X} - \lambda_4 \Dir_2$, project them into both cameras, add the noise to the projected endpoints, and take their junction.


Errors of a representative subset of the solvers with input noise are shown in Fig.~\ref{fig:noise_tests} without local optimization, Fig.~\ref{fig:lo_tests} with local optimization. The omitted solvers are extensions of the representative ones based on the same equations.

\noindent
\textbf{Orthogonality test.} Solvers 2-1-1$^{\perp}$, 1-2-1$^{\perp}$, and 2-0-1$^{\perp}$ assume that the angle between directions $\Dir_i$, $\Dir$ is $90^{\circ}$. We perturb the angle between $\Dir_i$, $\Dir$ and measure the error of those solvers. The result is shown in the supplementary.


\begin{figure}
    \centering
    \includegraphics[width=0.95\linewidth]{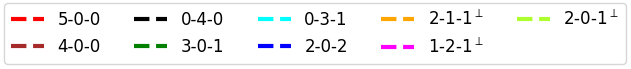}
       \includegraphics[width=0.99\linewidth,trim={0 3mm 0 0},clip]{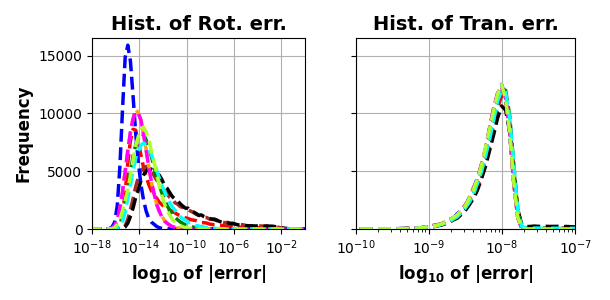}
    \caption{\textbf{Histograms of $\log_{10}$ rot.\ and trans.\ errors} in radians of minimal solvers computed from $100000$ noiseless samples.}
    \label{fig:stability_tests}
\end{figure}

\begin{figure}
    \centering
    \setlength\tabcolsep{0pt}
    \setlength\extrarowheight{-3pt}
    \renewcommand{\arraystretch}{0}
    \includegraphics[width=0.90\linewidth]{Figs/legend.png}
    \begin{tabular}{c c}
       \includegraphics[width=0.49\linewidth]{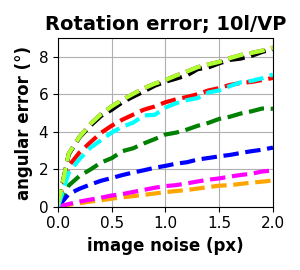}
       &
       \includegraphics[width=0.49\linewidth]{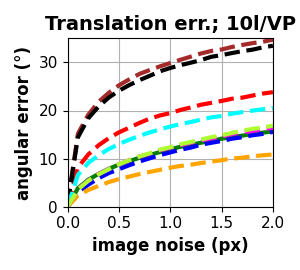}
       \\

       \includegraphics[width=0.49\linewidth]{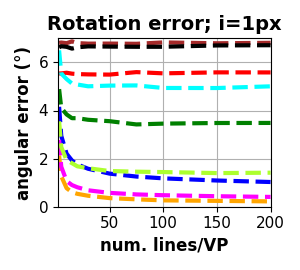}
       &
       \includegraphics[width=0.49\linewidth]{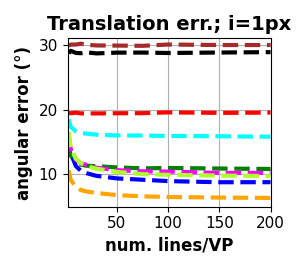}
       \\
    \end{tabular}
    
    \caption{
    \textbf{Average angular error in deg. of the proposed solvers} (see Fig.~\ref{fig:solvers_overview}) over 100000 runs, as a function of the image noise (\textit{top}), and the number of lines used for VP estimation (\textit{bottom}). 
    Image noise (i) and line number per VP (l/VP) are in the titles.}
    \label{fig:noise_tests}
\end{figure}

\begin{figure}
    \centering
    \setlength\tabcolsep{0pt}
    \setlength\extrarowheight{-3pt}
    \renewcommand{\arraystretch}{0}
    \includegraphics[width=0.92\linewidth]{Figs/legend.png}
    \begin{tabular}{c c}
       \includegraphics[width=0.49\linewidth]{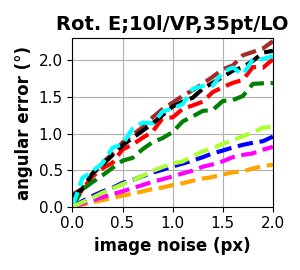}
       &
       \includegraphics[width=0.49\linewidth]{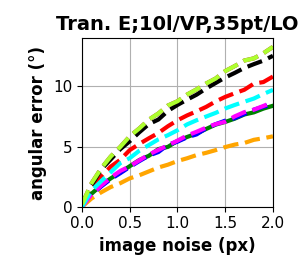}
       \\

       \includegraphics[width=0.49\linewidth]{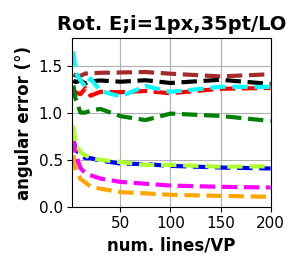}
       &
       \includegraphics[width=0.49\linewidth]{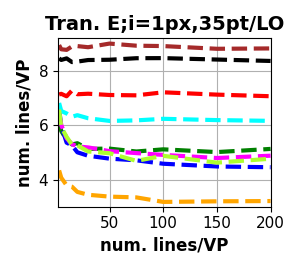}
       \\

       \includegraphics[width=0.49\linewidth]{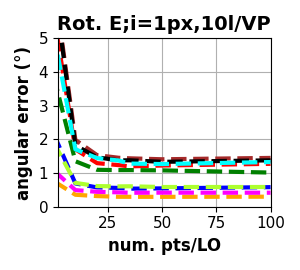}
       &
       \includegraphics[width=0.49\linewidth]{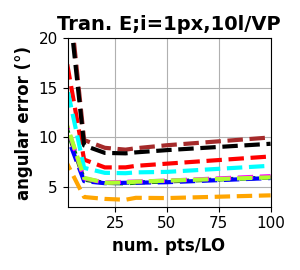}
       \\
    \end{tabular}
    
    \caption{
    \textbf{Angular error in deg. of the solvers}, averaged over 25000 runs, run with LO, as a function of (\textit{top}) the image noise (i), (\textit{middle}) the number of lines used for VP estimation (l/VP), and (\textit{bottom}) the number of points used inside LO (pt/LO).
    The fixed parameters for each test are reported in the titles.
    }
    \label{fig:lo_tests}
\end{figure}

%


\subsection{Real-world Experiments}


\vspace{1mm}
\noindent
\textbf{Datasets.}
We test our method on a variety of real-world datasets, both indoors and outdoors (see Table ~\ref{tab:datasets}). 

The 7Scenes dataset~\cite{7scenes} is a RGB-D dataset for visual localization, including 7 indoor scenes. We use the original GT poses provided with the images, and select pairs of images among all test sequences. 
Since each sequence is densely sampled, we take every 10th image $i$ and associate it with the image $i + 50$. 
ScanNet~\cite{dai2017scannet} is a large-scale RGB-D indoor dataset. It pictures some hard cases with low texture, where lines are expected to provide better contraints. 
We use the test set of 1500 images as in SuperGlue~\cite{sarlin20superglue}. 
The PhotoTourism dataset~\cite{phototourism} is a large-scale outdoor dataset of landmark pictures collected from the Internet, with GT poses from SfM. 
We reuse the validation pairs of the CVPR Image Matching Workshop 2020~\cite{Jin2020} with a total of 9900 pairs. 
ETH3D is an indoor-outdoor dataset~\cite{Schops_2017_eth3d}. 
We use the 13 scenes of the training set of the high resolution multi-view images, and sample all pairs of images with at least 500 GT keypoints in common.
The KITTI dataset \cite{Geiger2012CVPR, Geiger2013IJRR} is an outdoor dataset focused on the driving scenario. We use the 11 sequences of the training split of the Visual Odometry Challenge. 
For every sequence, we sample every 10th frame, and form pairs of consecutive images. 
This results in 2319 image pairs.
Finally, the LaMAR dataset \cite{sarlin2022lamar} is an indoor-outdoor dataset focused on augmented reality. 
We use the images of the validation split on Hololens in the CAB building. 
We use consecutive images to form pairs, resulting in 1423 pairs.

We use lines detected by DeepLSD~\cite{pautrat2023deeplsd} and matched with GlueStick~\cite{pautrat2023gluestick}.
While we were experimenting with a number of methods to obtain lines (\eg, LSD) and match (\eg, SOLD2) them, this combination leads to the best performance on all tested datasets. 
 The vanishing points are calculated from these lines by Prog-X~\cite{Barath_2019_ICCV}. 
We run LoFTR~\cite{sun2021loftr} to obtain point correspondences.
We generate junction correspondences from line segment pairs that actually intersect in both images.
This proved to be a good heuristic to obtain accurate junctions.
Also, we consider the line endpoints as additional point correspondences.

\begin{table}
    \centering
    \scriptsize
    \setlength{\tabcolsep}{4pt}
    \resizebox{1.0\columnwidth}{!}{\begin{tabular}{lcccccc}
        \toprule
         & 7Scenes~\cite{7scenes} & ScanNet~\cite{dai2017scannet} & PhotoT.~\cite{Jin2020} & ETH3D~\cite{Schops_2017_eth3d} & KITTI\cite{Geiger2012CVPR} & LaMAR\cite{sarlin2022lamar} \\
        \midrule
        \# images & 1610 & 1500 & 9900 & 1969 & 2319 & 1423  \\
        GT type & Kinect & Kin.\ + CAD & SfM & LiDAR & Laser & Laser \\
        Indoors & \checkmark & \checkmark & \xmark & \checkmark & \xmark & \checkmark \\ 
        Outdoors & \xmark & \xmark & \checkmark & \checkmark & \checkmark & \checkmark \\ 
        \bottomrule
    \end{tabular}}
    \caption{\textbf{Datasets overview.} We consider a variety of indoor/outdoor datasets with different GT modalities.}
    \label{tab:datasets}
\end{table}

\vspace{1mm} \noindent 
\textbf{7Scenes dataset.}
Table~\ref{tab:7scenes} presents the Area Under Curve (AUC) for the maximum rotation and translation errors, specifically $\max{(\epsilon_\textbf{R}, \epsilon_\textbf{t})}$, at error thresholds of $5^\circ$, $10^\circ$, and $20^\circ$. 
Additionally, it reports the median pose error in degrees and the average time in milliseconds on the 7Scenes dataset. 
The \textit{first} row presents the results of the proposed method when applied to LoFTR~\cite{sun2021loftr} point correspondences, effectively acting as MSAC with non-linear final optimization. 
The \textit{second} row, labeled 5PC + junc., extends this method to LoFTR correspondences and those derived from the endpoints of line matches and junctions.
The \textit{third} row, denoted as 5PC + 4PC, employs the 5PC essential matrix and 4PC homography solvers together, representing the scenario where all point-based solvers are used. 
The \textit{fourth} row integrates the aforementioned solvers but also includes the line endpoints and junctions.
The \textit{fifth} row introduces our proposed hybrid estimator, incorporating all point and line-based solvers. 
Finally, the last row also considers the line endpoints and junctions during estimation.

From the insights offered by Table~\ref{tab:7scenes}, the contribution of line endpoints and junctions appears non-essential on this dataset. However, the hybrid estimator proves superior, enhancing the AUC scores by 1-2 points.

\vspace{1mm} \noindent 
\textbf{ScanNet dataset.}
Table~\ref{tab:scannet} presents the AUC scores at error thresholds of $5^\circ$, $10^\circ$, and $20^\circ$, alongside the median pose error in degrees and the average runtime in milliseconds for the ScanNet dataset. 
Our observations regarding the effectiveness of junctions and endpoints in this dataset align with those from 7Scenes; their advantage in enhancing accuracy remains ambiguous.
While jointly employing the 5PC and 4PC solvers offers a noticeable accuracy improvement over solely using 5PC, with an average increase of approximately 1 AUC point, our proposed hybrid estimator realizes the most significant gains. 
This estimator, which integrates all line-based solvers, improves by 2-3 AUC points compared to point-only strategies. 
Even with these advancements, the computation remains marginally slower, ensuring real-time performance on this dataset.

\vspace{1mm} \noindent 
\textbf{PhotoTourism dataset.}
Table~\ref{tab:scannet} details the AUC scores at error thresholds of $5^\circ$, $10^\circ$, and $20^\circ$. Additionally, it reports the median pose error in degrees and the average runtime in milliseconds for the PhotoTourism dataset. Line junctions and endpoints appear to be counterproductive on this particular dataset, causing a notable decline in accuracy for point-based estimators.
Interestingly, our hybrid method manages to harness these elements, producing the most accurate results. It surpasses the baseline by a margin of 1-2 AUC points while maintaining real-time performance. 

\vspace{1mm} \noindent 
\textbf{ETH3D dataset.} 
Table~\ref{tab:eth3d} details the results on the ETH3D dataset. 
Our observations from this dataset mirror those from the PhotoTourism dataset. 
Specifically, point-based solutions experience a decrease in accuracy when the LoFTR correspondences are combined with those derived from line junctions and endpoints. 
However, our proposed hybrid methodology successfully harnesses this extra data, resulting in a noteworthy enhancement of 2-3 AUC points.

\vspace{1mm} \noindent 
\textbf{KITTI dataset.}
Table~\ref{tab:kitti} outlines the outcomes on the KITTI dataset. 
In this distinct setting -- characterized by a forward-moving camera -- line endpoints and junctions enhance the performance of all evaluated methods, often by a significant margin. 
The proposed hybrid method, integrated with endpoints and junctions showcases top-tier accuracy, and it is on par with 5PC + junc. 
Notably, in this scenario, the hybrid method is the fastest and it is second fastest when using additional correspondences from the line matches.

\vspace{1mm} \noindent 
\textbf{LaMAR dataset.}
Table~\ref{tab:lamar} presents the results on the LaMAR dataset. 
Within this dataset, the integration of line endpoints and junctions results in a marked improvement, enhancing accuracy by 3-6 AUC points. 
The proposed approach, which simultaneously utilizes LoFTR, endpoint, junction, and line matches, stands out. 
Compared to the baseline, this method manifests a substantial boost in performance, improving by 4-7 AUC points on average.

\begin{table}
    \small
    \centering
    \setlength{\tabcolsep}{4pt}
    \begin{tabular}{l | ccc | c c}
        \toprule
        \multirow{2}{*}{Solver} & \multicolumn{3}{c |}{Pose Accuracy $\uparrow$} & Med.\ $\downarrow$ & Time  \\
        & AUC@5$^\circ$ & @10$^\circ$ & @20$^\circ$ & err.\ ($^\circ$) & (ms) \\
        \midrule
        5PC & 16.3 & 36.6 & 57.5 & \underline{3.5} & \textbf{77.1} \\
        5PC + junc.\ & 16.2 & 36.8 & 57.9 & 3.6 & 96.7 \\
        5PC + 4PC & 16.1 & 36.8 & 57.9 & 3.6 & \underline{88.4} \\
        5PC + 4PC + junc.\ & 16.6 & 37.1 & 57.7 & \underline{3.5} & 98.5 \\
        \midrule
        Hybrid & \textbf{17.3} & \textbf{38.6} & \underline{59.1} & \textbf{3.4} & 206.0 \\
        Hybrid + junc.\ & \underline{16.8} & \underline{38.4} & \textbf{59.3} & \underline{3.5} & 214.0 \\
        \bottomrule
    \end{tabular}
    \caption{\textbf{Relative pose estimation on 7Scenes~\cite{7scenes}.} 
    We report the performance of the proposed method on LoFTR~\cite{sun2021loftr} point and DeepLSD + GlueStick~\cite{pautrat2023deeplsd,pautrat2023gluestick} line correspondences with the 5PC solver~\cite{DBLP:conf/cvpr/Nister03}, with the 5PC + 4PC solvers~\cite{DBLP:books/cu/HZ2004}, and with all line-based solvers (Hybrid) with line junctions and endpoints (+ junc).
    The best results are in \textbf{bold}, and the second bests are \underline{underlined}.}
    %
    \label{tab:7scenes}
\end{table}

\begin{table}
    \small
    \centering
    \setlength{\tabcolsep}{4pt}
    \begin{tabular}{l | ccc | c c}
        \toprule
        \multirow{2}{*}{Solver} & \multicolumn{3}{c |}{Pose Accuracy $\uparrow$} & Med.\ $\downarrow$ & Time  \\
        & AUC@5$^\circ$ & @10$^\circ$ & @20$^\circ$ & err.\ ($^\circ$) & (ms) \\
        \midrule
        5PC & 20.8 & 40.2 & 58.1 & 3.1 & \textbf{10.2} \\
        5PC + junc.\ & 20.9 & 39.8 & 58.3 & 3.2 & 32.6 \\
        5PC + 4PC & 21.7 & 41.0 & 58.7 & \underline{3.0} & 29.9 \\
        5PC + 4PC + junc.\ & 21.9 & 40.8 & 58.8 & \underline{3.0} & 22.6 \\
        \midrule
        Hybrid & \textbf{23.1} & \textbf{42.5} & \textbf{60.0} & \textbf{2.9} & \underline{22.6} \\
        Hybrid + junc.\ & \underline{22.3} & \underline{41.6} & \underline{59.4} & \underline{3.0} & 53.0 \\
        \bottomrule
    \end{tabular}
    \caption{\textbf{Relative pose estimation on ScanNet~\cite{dai2017scannet}.} We report the performance of the proposed method on LoFTR~\cite{sun2021loftr} point and DeepLSD + GlueStick~\cite{pautrat2023deeplsd,pautrat2023gluestick} line correspondences with the 5PC solver~\cite{DBLP:conf/cvpr/Nister03}, with the 5PC + 4PC solvers~\cite{DBLP:books/cu/HZ2004}, and with all line-based solvers (Hybrid) with line junctions and endpoints (+ junc).
    The best results are in \textbf{bold}, and the second bests are \underline{underlined}.
    }
    \label{tab:scannet}
\end{table}

\begin{table}
    \small
    \centering
    \setlength{\tabcolsep}{4pt}
    \begin{tabular}{l | ccc | c c}
        \toprule
        \multirow{2}{*}{Solver} & \multicolumn{3}{c |}{Pose Accuracy $\uparrow$} & Med.\ $\downarrow$ & Time  \\
        & AUC@5$^\circ$ & @10$^\circ$ & @20$^\circ$ & err.\ ($^\circ$) & (ms) \\
        \midrule
        5PC & 59.5 & \underline{74.6} & 85.4 & \textbf{0.9} & \underline{42.6} \\
        5PC + junc.\ & 54.0 & 70.2 & 82.7 & 1.1 & 71.7 \\
        5PC + 4PC & 58.8 & 73.9 & 85.0 & \underline{1.0} & \textbf{42.3} \\
        5PC + 4PC + junc.\ & 53.1 & 69.3 & 82.1 & 1.2 & 66.1 \\
        \midrule
        Hybrid & \textbf{61.3} & \textbf{75.9} & \underline{86.1} & \textbf{0.9} & 68.4 \\
        Hybrid + junc.\ & \underline{61.1} & \textbf{75.9} & \textbf{86.2} & \textbf{0.9} & 82.7 \\
        \bottomrule
    \end{tabular}
    \caption{\textbf{Relative pose estimation on PhotoTourism~\cite{Jin2020}.} 
    We report the performance of the proposed method on LoFTR~\cite{sun2021loftr} point and DeepLSD + GlueStick~\cite{pautrat2023deeplsd,pautrat2023gluestick} line matches with the 5PC solver~\cite{DBLP:conf/cvpr/Nister03}, with the 5PC + 4PC solvers~\cite{DBLP:books/cu/HZ2004}, and with all line-based solvers (Hybrid) with line junctions and endpoints (+ junc). The best results are in \textbf{bold}, and the second bests are \underline{underlined}.}
    \label{tab:phototourism}
\end{table}

\begin{table}
\small
    \centering
    \setlength{\tabcolsep}{4pt}
    \begin{tabular}{l | ccc | c c}
        \toprule
        \multirow{2}{*}{Solver} & \multicolumn{3}{c |}{Pose Accuracy $\uparrow$} & Med.\ $\downarrow$ & Time  \\
        & AUC@5$^\circ$ & @10$^\circ$ & @20$^\circ$ & err.\ ($^\circ$) & (ms) \\
        \midrule
        5PC & 72.1 & 81.0 & 86.3 & \underline{0.5} & \phantom{1}95.3 \\
        5PC + junc.\ & 70.2 & 80.3 & 86.3 & \underline{0.5} & 108.6 \\
        5PC + 4PC & 71.7 & 80.8 & 86.0 & \underline{0.5} & \phantom{1}84.0 \\
        5PC + 4PC + junc.\ & 70.2 & 80.3 & \underline{86.4} & \underline{0.5} & \phantom{1}98.6 \\
        \midrule
        Hybrid & \underline{75.0} & \underline{82.9} & \textbf{87.5} & \textbf{0.4} & \phantom{1}\textbf{56.0} \\
        Hybrid + junc.\ & \textbf{75.3} & \textbf{83.1} & \textbf{87.5} & \textbf{0.4} & \phantom{1}\underline{56.2} \\
        \bottomrule
    \end{tabular}
    \caption{\textbf{Relative pose estimation on ETH3D~\cite{Schops_2017_eth3d}.} 
    We report the performance of the proposed method on LoFTR~\cite{sun2021loftr} point and DeepLSD + GlueStick~\cite{pautrat2023deeplsd,pautrat2023gluestick} line correspondences with the 5PC solver~\cite{DBLP:conf/cvpr/Nister03}, with the 5PC + 4PC solvers~\cite{DBLP:books/cu/HZ2004}, and with all line-based solvers (Hybrid) with line junctions and endpoints (+ junc). The best results are in \textbf{bold}, and the second bests are \underline{underlined}.}
    \label{tab:eth3d}
\end{table}

\begin{table}
\small
    \centering
    \setlength{\tabcolsep}{4pt}
    \begin{tabular}{l | ccc | c c}
        \toprule
        \multirow{2}{*}{Solver} & \multicolumn{3}{c |}{Pose Accuracy $\uparrow$} & Med.\ $\downarrow$ & Time  \\
        & AUC@5$^\circ$ & @10$^\circ$ & @20$^\circ$ & err.\ ($^\circ$) & (ms) \\
        \midrule
        5PC & 61.8 & 70.6 & 75.8 & \textbf{0.7} & 277.8 \\
        5PC + junc.\ & \textbf{63.0} & \textbf{72.2} & \underline{77.7} & \textbf{0.7} & 321.4 \\
        5PC + 4PC & 61.1 & 70.4 & 75.7 & \textbf{0.7} & \textbf{198.0} \\
        5PC + 4PC + junc.\ & 61.6 & 71.0 & 77.0 & \textbf{0.7} & 238.6 \\
        \midrule
        Hybrid & 61.1 & 70.1 & 75.8 & \textbf{0.7} & \underline{229.7} \\
        Hybrid + junc.\ & \underline{62.4} & \textbf{72.2} & \textbf{77.9} & \textbf{0.7} & 250.5 \\
        \bottomrule
    \end{tabular}
    \caption{\textbf{Relative pose estimation on KITTI~\cite{Geiger2012CVPR, Geiger2013IJRR}.} 
    We report the performance of the proposed method on LoFTR~\cite{sun2021loftr} point and DeepLSD + GlueStick~\cite{pautrat2023deeplsd,pautrat2023gluestick} line correspondences with the 5PC solver~\cite{DBLP:conf/cvpr/Nister03}, with the 5PC + 4PC solvers~\cite{DBLP:books/cu/HZ2004}, and with all line-based solvers (Hybrid) with line junctions and endpoints (+ junc). The best results are in \textbf{bold}, and the second bests are \underline{underlined}.}
    \label{tab:kitti}
\end{table}

\begin{table}
\small
    \centering
    \setlength{\tabcolsep}{4pt}
    \begin{tabular}{l | ccc | c c}
        \toprule
        \multirow{2}{*}{Solver} & \multicolumn{3}{c |}{Pose Accuracy $\uparrow$} & Med.\ $\downarrow$ & Time  \\
        & AUC@5$^\circ$ & @10$^\circ$ & @20$^\circ$ & err.\ ($^\circ$) & (ms) \\
        \midrule
        5PC & 22.6 & 37.3 & 50.7 & 2.9 & \textbf{40.0} \\
        5PC + junc.\ & \underline{25.9} & \underline{41.6} & \underline{56.0} & \underline{2.4} & 64.3 \\
        5PC + 4PC & 22.2 & 37.2 & 51.1 & 3.0 & \underline{50.0} \\
        5PC + 4PC + junc.\ & 25.0 & 40.7 & 55.0 & 2.5 & 63.9 \\
        \midrule
        Hybrid & 24.7 & 39.8 & 53.0 & 2.7 & 81.2 \\
        Hybrid + junc.\ & \textbf{26.9} & \textbf{43.3} & \textbf{57.9} & \textbf{2.3} & 156.4 \\
        \bottomrule
    \end{tabular}
    \caption{\textbf{Relative pose estimation on LaMAR~\cite{sarlin2022lamar}.} 
    We report the performance of the proposed method on LoFTR~\cite{sun2021loftr} point and DeepLSD + GlueStick~\cite{pautrat2023deeplsd,pautrat2023gluestick} line correspondences with the 5PC solver~\cite{DBLP:conf/cvpr/Nister03}, with the 5PC + 4PC solvers~\cite{DBLP:books/cu/HZ2004}, and with all line-based solvers (Hybrid) with line junctions and endpoints (+ junc). The best results are in \textbf{bold}, and the second bests are \underline{underlined}.}
    \label{tab:lamar}
\end{table}

\vspace{1mm} \noindent 
\textbf{Vanishing Point Detection and Optimization.}
As described in Section~\ref{sec:vp_matching}, we simultaneously detect and match vanishing points in pairs of images, using the matched lines. The detection of VPs itself is done with Progressive-X~\cite{Barath_2019_ICCV}.
The proposed joint estimation runs for $2.95$ ms per pair on average on 7Scenes.
Running Prog-X independently on the images and then matching the VPs takes $3.67$ ms.
After detecting the VPs, we further refine them with a least square optimization using the Ceres solver~\cite{ceres}. For each vanishing point $\Vanishing$, we gather all inlier lines $\Line_i$ and re-fit the VP to these inliers, minimizing the sum of squared distances between the VP and the lines: $v_\text{refined} = \arg \min_\Vanishing \sum_{\Line_i} \delta(\Vanishing, \Line_i)^2$, where $\delta$ is the line-VP distance introduced in \cite{Tardif09}.

\begin{table}
    \centering
    \setlength{\tabcolsep}{4.8pt}
    \resizebox{0.7\columnwidth}{!}{\begin{tabular}{r | c c c c}
        \toprule
         &  Standard & VP joint & VP opt. & Both \\
        \midrule
        3-0-1 & 19.6 & 21.8 & 21.4 & \textbf{23.6} \\ 
        0-3-1 & \phantom{1}9.1 & 11.0 & 10.2 & \textbf{12.2} \\ 
        2-0-2 & \phantom{1}4.8 & \phantom{1}6.0 & \phantom{1}5.9 & \phantom{1}\textbf{7.1} \\ 
        2-1-1$^\perp$ & 20.4 & 21.8 & 21.5 & \textbf{23.1} \\ 
        1-2-1$^\perp$ & 17.7 & 19.3 & 18.2 & \textbf{20.1} \\ 
        \bottomrule
    \end{tabular}}
    \caption{\textbf{Ablation study of VP estimation on 7Scenes~\cite{7scenes}.} We report the AUC@10$^\circ$ score of representative solvers using VPs detected independently in each image and then matched (Standard), VPs detected jointly as proposed in Section~\ref{sec:vp_matching} (VP joint), VPs after numerical optimization (VP opt.), and when using both (Both).}
    \label{tab:vp_optimization}
\end{table}

The improvement from the optimization and the joint estimation is reported in Table~\ref{tab:vp_optimization} on the 7Scenes dataset when using the VP-based solvers independently.
Both the joint estimation and the optimization improve, and the best results are obtained when both are used to get accurate VPs.

%



\section{Conclusion}

In this paper, we have delved into exploiting 2D point and line correspondences to estimate the calibrated relative pose of two cameras. 
Our findings underscore that while leveraging line correspondences is not always straightforward, strategic incorporation of their endpoints, junctions, vanishing points, and line-based solvers can lead to a consistent improvement over traditional point-based methods. 
This nuanced approach improves across a diverse range of six datasets, from indoor to outdoor scenarios to applications in self-driving, mixed reality, and Structure-from-Motion. 
We believe our findings will serve as a helpful guide for those looking to use line correspondences in relative pose estimation. 
We will make the code publicly available.

%
%
%



{\footnotesize
\vspace{0.5em}
\noindent \textbf{Acknowledgments.}
We thank Marcel Geppert for helping to review this paper. Daniel Barath was supported by the ETH Postdoc Fellowship.
}

\newpage

\twocolumn[
    {\centering \Large Supplementary Material \\[1ex]}
    \vspace*{3ex}
]

\vspace{-10pt}
\appendix
\section{Complete List of Configurations} \label{sec:possible_configs}
In this section, we will provide the complete list of configurations that can be obtained using points, coplanar lines, vanishing points, and lines orthogonal to them. The section is separated into four subsections. In the first subsection, we give a complete list of configurations of points, vanishing points, and lines orthogonal to them. In the second subsection, we show how to extend these configurations with lines. In the last two subsections, we prove two propositions needed to obtain the list.


\subsection{Discussion on Completeness}

Here, we are going to give a complete list of configurations of points, vanishing points, and lines orthogonal to them that can be practically used for the estimation of relative pose between two cameras. To show this, we consider the following rules:
\begin{itemize}
    \item Calibrated relative pose has 5 degrees of freedom \cite{DBLP:conf/cvpr/Nister03}.
    \item The considered configurations have 0, 1, or 2 vanishing points (VPs) since a third vanishing point does not provide any new information (Sec.~\ref{sec:third_vanishing}).
    \item One vanishing point fixes 2 degrees of freedom. (Sec.~2.2.2 in the main paper)
    \item Two vanishing points fix 3 degrees of freedom. (Sec.~2.2.4 in the main paper)
    \item A line orthogonal to a vanishing point can create a second VP \cite{DBLP:conf/cvpr/ElqurshE11}, in the case of 2VPs, it does not add any new constraints (Sec.~\ref{sec:orthogonal_line}).
    \item A single point correspondence fixes a single degree of freedom as discussed in \cite{DBLP:books/cu/HZ2004}.
    \item Four coplanar points determine calibrated relative pose since via a homography \cite{DBLP:books/cu/HZ2004}, which can be decomposed to the relative pose \cite{20.500.11850/63639}.
    \item Since $n < 4$ points are always coplanar, their coplanarity does not add any new constraints.
\end{itemize}
Using these rules, we can obtain all possible configurations of points, vanishing points, and lines orthogonal to them that can be used for relative pose estimation. There are five such configurations that are summarized in Table~\ref{tab:configurations}.


\begin{table}[]
    \centering
    \small
    \begin{tabular}{ccccc}
        \toprule
        VPs & LC$\perp$VP & PC generic & PC coplanar & Code \\
        \midrule
        0 & N/A & 5 & 0 & 5-0-0\\
        0 & N/A & 0 & 4 & 4-0-0\\
        1 & 0 & 3 & 0 & 3-0-1\\
        1 & 1 & 2 & 0 & 2-1-1$^{\perp}$\\
        2 & 0 & 2 & 0 & 2-0-2\\
        \bottomrule
    \end{tabular}
    \caption{\textbf{Overview of relevant configurations} using point correspondences (PC), vanishing points (VP), and line correspondences (LC) orthogonal to them. Each row corresponds to one family of configurations. We give a code in format X-Y-Z, where X is the number of points, Y is the number of lines, and Z is the number of vanishing points.}
    \label{tab:configurations}
\end{table}

\subsection{Obtaining all configurations}
To obtain more configurations, we can replace points by lines according to the following rules:
\begin{itemize}
    \item Three point correspondences can be replaced by \textit{three coplanar lines}, that intersect in these points. Therefore, configuration 2-3-0 can be obtained from 5-0-0, and 0-3-1 from 3-0-1.
    \item If we have four coplanar points, we can replace each with a line in the same plane \cite{DBLP:books/cu/HZ2004} since the coplanar points and lines are transformed by the homography. 
    To estimate the homography from a minimal sample, the sum of the number of points and lines must be $4$. Therefore, we can obtain four new configurations from 4-0-0 as: 3-1-0, 2-2-0, 1-3-0, 0-4-0. 
    \item One point correspondence can be replaced with an intersection of two lines. We prove in Sec.~\ref{sec:junctions} that the constraints imposed by the coplanarity of the intersecting lines are equivalent to using the junction as a point correspondence, \ie, a line junction only gives us one independent constraint.
\end{itemize}

\noindent
Furthermore, the configuration 2-1-1$^{\perp}$ can be modified in the following way:
\begin{itemize}
    \item If one of the points is replaced with a line junction, and one of the lines building the line junction is orthogonal to the vanishing point, we obtain configuration 1-2-1$^{\perp}$.
    \item If the line passing through the points is orthogonal to the vanishing point, we obtain configuration 2-0-1$^{\perp}$.
    \item
    There are no other ways to use two points in order to obtain a line.
\end{itemize}
In both cases listed, it is possible to extract the orthogonal line, obtain the second VP from it, use both VPs to calculate rotation, and the points to calculate translation.

If we modify the configurations from Table~\ref{tab:configurations} with the rules from this section, we obtain 13 configurations shown in Figure~\ref{fig:solvers_overview}. Each of these configurations can be further modified by replacing any of the PCs with line junctions.

\subsection{Number of Vanishing Points in Calibrated Minimal Configurations}\label{sec:third_vanishing}

Here, we show that the congifigurations that can be practically used for estimating calibrated relative pose between two views contain 0, 1, or 2 vanishing points. 
%
%
Let us have three generic vanishing point correspondences $(\Vanishing_1, \Vanishing'_1)$, $(\Vanishing_2, \Vanishing'_2)$, $(\Vanishing_3, \Vanishing'_3)$. We want to find all relative poses $\Rot$, $\Tran$ that are consistent with the vanishing points.

We show in the main paper that for a generic configuration, the first two vanishing points are consistent with exactly $4$ rotation matrices $\Rot_a$, $\Rot_b$, $\Rot_c$, $\Rot_d$. Since the first two vanishing points already fix a finite set of rotations, the third vanishing point can only be used to constrain translation.
We want to find a set of all translations $\Tran \in \RR^3$ that satisfy the epipolar constraint as follows:
\begin{equation*}
    \Vanishing_3'^T [\Tran]_{\times} \Rot \Vanishing_3 = 0.
\end{equation*}
Because $(\Vanishing_3, \Vanishing'_3)$ is a vanishing point correspondence, there holds $\Vanishing_3' = \Rot \Vanishing_3$. Therefore, we can rewrite the epipolar constraint as
\begin{equation*}
    \Vanishing_3'^T [\Tran]_{\times} \Vanishing_3' = 0.
\end{equation*}
The left side is equal to ${\Vanishing'_3}^\text{T} (\Tran \times \Vanishing'_3)$, which is equal to zero for every $\Tran \in \RR^3$. Therefore, the third vanishing point does not impose any constraints on $\Tran$. Thus, practical configurations can only have 0, 1, or 2 vanishing points.
\\

\subsection{Use of a Line Orthogonal to a VP} \label{sec:orthogonal_line}
Here, we show that using a line orthogonal to a VP only makes sense if there is a single VP. Then, it can be used together with the VP to fix the rotation as shown in \cite{DBLP:conf/cvpr/ElqurshE11}.

It is clear that if there is no vanishing point, there cannot be any line orthogonal to a vanishing point.

Now, we will show that if there are two vanishing points, the line orthogonal to one of them does not fix any degrees of freedom. If there are two vanishing points, the rotation $\Rot$ is already fixed by these vanishing points. Therefore, the orthogonal line could only be used for fixing translation.

Let us have two vanishing point correspondences $(\Vanishing_1, \Vanishing'_1)$ and $(\Vanishing_2, \Vanishing'_2)$ that yield rotation matrix $\Rot$, and a line correspondence $(\Line, \Line')$ that is supposed to be orthogonal to the vanishing point $(\Vanishing_1, \Vanishing'_1)$.
Let $\textbf{L}$ denote the 3D line that can be obtained by backprojecting the 2D line $\Line$. The direction $\mathbf{d}$ of $\textbf{L}$ can be obtained as $\mathbf{d} = \Line \times \Vanishing_1$. Furthermore, the direction can be obtained as $\Rot^\text{T} (\Line' \times \Vanishing'_1)$. The solution exists if and only if $\Line \times \Vanishing_1 \sim \Rot^\text{T} (\Line' \times \Vanishing'_1)$.

If this equation does not hold, there is no solution. Therefore, let us suppose that this equation holds. Then, for every translation $\Tran \in \RR^3$, we can uniquely triangulate the 3D line $\textbf{L}$. The direction of this triangulated line $\textbf{L}$ has to be $\mathbf{d}$. Therefore, the orthogonality of line correspondence $(\Line, \Line')$ to a vanishing point correspondence $(\Vanishing_1, \Vanishing'_1)$ does not impose any constraints on translation $\Tran$, which concludes the proof that in the case of 2 VPs, a line orthogonal to a VP does not impose any new constraints on the relative pose.





\section{Relation of Coplanarity and Junctions} \label{sec:junctions}
In the main paper, we design solvers that leverage line junctions to obtain point correspondences from line correspondences. In this section, we give more details on this process, and we prove that the constraints implied by coplanar lines are equivalent to using line junctions. We use the notation from the main paper.

\noindent
\textbf{Line junctions.} If two lines $\Worldline_1$, $\Worldline_2$ in space intersect, they share a point $\Worldpoint \in \RR^3$. Let $\Line_1, \Line_2 \in \RR^3$ be homogeneous coordinates of the projections of $\Worldline_1$, $\Worldline_2$ into camera $\Proj$. Then, the projection $\Point$ of $\Worldpoint$ can be obtained as the intersection of $\Line_1, \Line_2$ as $\Point = \Line_1 \times \Line_2$.

Let us have two cameras $\Proj_1$, $\Proj_2$. Let $\Line_1, \Line_2$ be the projections of $\Worldline_1$, $\Worldline_2$ into $\Proj_1$, and $\Line'_1, \Line'_2$ the projections into $\Proj_2$. Then, the intersections $\Line_1 \times \Line_2$, and $\Line'_1 \times \Line'_2$ present a valid point correspondence between cameras $\Proj_1$, $\Proj_2$. According to the epipolar constraint, there holds:
\begin{equation} \label{eq:epipolar}
    (\Line'_1 \times \Line'_2)^\text{T} [\Tran]_{\times} \Rot (\Line_1 \times \Line_2),
\end{equation}
where $\Rot$, $\Tran$ is the relative pose between cameras $\Proj_1$, $\Proj_2$.
\\

\noindent
\textbf{Coplanar lines.} If two lines in space $\Worldline_1$, $\Worldline_2$ are coplanar, their projections $\Line_1$, $\Line_2$ into the first camera $\Proj_1$, and $\Line'_1$, $\Line'_2$ into the second camera $\Proj_2$ are related by the same homography matrix $\Hom$ as follows:
\begin{equation}
    \Line_1 \sim \Hom^\text{T} \Line'_1, \ \ \Line_2 \sim \Hom^\text{T} \Line'_2. \label{eq:same_homography}
\end{equation}
If the cameras $\Proj_1$, $\Proj_2$ are calibrated, the homography has the form $\Hom = \Rot - \Tran \Normline^\text{T}$, where $\Normline \in \RR^3$ is the normal of the plane defined by lines $\Worldline_1$, $\Worldline_2$. Then, \eqref{eq:same_homography} becomes:
\begin{equation}
    \Line_1 \sim (\Rot^\text{T} - \Normline \Tran^\text{T}) \Line'_1, \ \ \Line_2 \sim (\Rot^\text{T} - \Normline \Tran^\text{T}) \Line'_2. \label{eq:homography_calibrated}
\end{equation}

\subsection{Proof of Equivalence of Coplanar Lines and Junctions}
Now, we are going to show that the coplanarity constraint \eqref{eq:epipolar}, and the common homography constraint \eqref{eq:homography_calibrated} are equivalent, \ie, for generic $\Line_1$, $\Line_2$, $\Line'_1$, $\Line'_2$, the set of relative poses satisfying \eqref{eq:epipolar} is equal to the set of relative poses satisfying constraint \eqref{eq:homography_calibrated}.

\subsubsection{Coplanarity $\implies$ Junction} \label{sec:direction1}
First, we are going to show that if $\Rot$, $\Tran$ satisfies \eqref{eq:homography_calibrated} for generic projections $\Line_1$, $\Line_2$, $\Line'_1$, $\Line'_2$, then \eqref{eq:epipolar} holds.

\begin{proposition}
    Let $\Line_1$, $\Line_2$, $\Line'_1$, $\Line'_2$ be generic line projections. If $\Rot$, $\Tran$ satisfies \eqref{eq:homography_calibrated} for generic projections $\Line_1$, $\Line_2$, $\Line'_1$, $\Line'_2$, then \eqref{eq:epipolar} holds.
\end{proposition}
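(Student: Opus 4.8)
The plan is to show that the common-homography constraint \eqref{eq:homography_calibrated} forces the two backprojected planes (one for each camera) to coincide as a single 3D plane containing both lines $\Worldline_1$, $\Worldline_2$, and that coplanarity of these lines is exactly what makes their junction a consistent point correspondence satisfying the epipolar equation \eqref{eq:epipolar}. Concretely, I would start from the two relations $\Line_1 \sim (\Rot^\text{T} - \Normline \Tran^\text{T}) \Line'_1$ and $\Line_2 \sim (\Rot^\text{T} - \Normline \Tran^\text{T}) \Line'_2$, i.e.\ both line pairs are related by the \emph{same} calibrated homography $\Hom = \Rot - \Tran \Normline^\text{T}$. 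The existence of such a single $\Hom$ (equivalently a single plane normal $\Normline$) is the algebraic encoding of the geometric fact that $\Worldline_1$ and $\Worldline_2$ lie on one common plane $\Plane$.

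The key computational step is to track how the junction points transform. Since lines transform by $\Hom^\text{T}$ under the homography, points on the plane transform by $\Hom$, so I would verify that the junction in the first image, $\Point = \Line_1 \times \Line_2$, maps to the junction in the second image, $\Point' = \Line'_1 \times \Line'_2$, via $\Point' \sim \Hom \Point$. This is the standard adjugate identity: for any invertible $\matr{M}$, one has $(\matr{M}\mathbf{a}) \times (\matr{M}\mathbf{b}) \sim \text{adj}(\matr{M})^\text{T}(\mathbf{a}\times\mathbf{b})$, and applying this to $\Hom^\text{T}$ acting on the lines yields exactly the point-homography relation $\Point' \sim \Hom \Point$ for the cross products. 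Hence the junctions $\Point$, $\Point'$ form a genuine point correspondence induced by the plane homography.

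Once $\Point' \sim \Hom \Point$ is established, I would invoke the well-known compatibility of plane homographies with the epipolar geometry: any homography $\Hom = \Rot - \Tran \Normline^\text{T}$ compatible with $(\Rot, \Tran)$ satisfies $[\Tran]_{\times} \Hom = [\Tran]_{\times} \Rot = \Ess$, because $[\Tran]_{\times}(\Tran \Normline^\text{T}) = (\Tran \times \Tran)\Normline^\text{T} = \matr{0}$. Therefore $\Point'^\text{T} \Ess \Point = \Point'^\text{T} [\Tran]_{\times} \Hom \Point \sim \Point'^\text{T} [\Tran]_{\times} \Point'$, and the right-hand side vanishes identically since $\Tran \times \Point'$ is orthogonal to $\Point'$. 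Substituting $\Point = \Line_1 \times \Line_2$ and $\Point' = \Line'_1 \times \Line'_2$ gives precisely \eqref{eq:epipolar}, which is the claim.

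The main obstacle I anticipate is handling the projective scale factors ($\sim$ rather than $=$) cleanly: the homography relations hold only up to scale, and so do the cross-product identities, so I must confirm that the chain of proportionalities never multiplies by a vanishing scalar and that the genericity hypothesis on $\Line_1,\Line_2,\Line'_1,\Line'_2$ guarantees $\Hom$ is invertible and the junctions are well-defined (the lines are not parallel in either image). The genericity assumption is exactly what rules out these degenerate cases, so the argument reduces to the adjugate identity plus the algebraic fact $[\Tran]_{\times}\Tran = \matr{0}$, neither of which depends on the particular scales.
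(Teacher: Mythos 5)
Your proposal is correct and takes essentially the same route as the paper's proof: both apply the cross-product (adjugate) identity to the two homography relations to get $\Line'_1 \times \Line'_2 \sim \Hom(\Line_1 \times \Line_2)$, then annihilate the $\Tran \Normline^\text{T}$ term via $[\Tran]_{\times}\Tran = \mathbf{0}$, and finish with the orthogonality $\mathbf{y}^\text{T}(\mathbf{x} \times \mathbf{y}) = 0$. Your explicit handling of scale factors and of invertibility under the genericity assumption is a minor tightening of details the paper leaves implicit.
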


\begin{proof}
Let $\Rot$, $\Tran$ be a relative pose satisfying \eqref{eq:homography_calibrated}. Then, there is a normal $\Normline \in \RR^3$, for which \eqref{eq:homography_calibrated} holds. We get a cross product of the equations in \eqref{eq:homography_calibrated} to get:
\begin{equation*}
\begin{split}
    \Line_1 \times \Line_2 \sim (\Rot^\text{T} - \Normline \Tran^\text{T})^\text{-T} (\Line'_1 \times \Line'_2), \\
    (\Rot - \Tran \Normline^\text{T}) (\Line_1 \times \Line_2) \sim (\Line'_1 \times \Line'_2).
\end{split}
\end{equation*}
We multiply both sides of the equation by $[\Tran]_{\times}$ from left and use the fact that $\mathbf{x} \times \mathbf{x} = 0$ to get:
\begin{equation*}
    \begin{split}
        [\Tran]_{\times} (\Rot - \Tran \Normline^\text{T}) (\Line_1 \times \Line_2) \sim [\Tran]_{\times} (\Line'_1 \times \Line'_2),
        \\
        ([\Tran]_{\times} \Rot - [\Tran]_{\times} \Tran \Normline^\text{T}) (\Line_1 \times \Line_2) \sim [\Tran]_{\times} (\Line'_1 \times \Line'_2),
        \\
        [\Tran]_{\times} \Rot (\Line_1 \times \Line_2) \sim [\Tran]_{\times} (\Line'_1 \times \Line'_2).
    \end{split}
\end{equation*}
Now, we multiply both sides of the equation by $(\Line'_1 \times \Line'_2)^\text{T}$ and use the fact that $\mathbf{y}^\text{T} (\mathbf{x} \times \mathbf{y}) = 0$ to get:
\begin{equation*}
\begin{split}
    (\Line'_1 \times \Line'_2)^\text{T} [\Tran]_{\times} \Rot (\Line_1 \times \Line_2) \sim (\Line'_1 \times \Line'_2)^\text{T} [\Tran]_{\times} (\Line'_1 \times \Line'_2),
    \\
    (\Line'_1 \times \Line'_2)^\text{T} [\Tran]_{\times} \Rot (\Line_1 \times \Line_2) = 0.
\end{split}
\end{equation*}
This is exactly the epipolar constraint \eqref{eq:epipolar}.
\end{proof}

\subsubsection{Junction $\implies$ Coplanarity} \label{sec:direction2}
Now, we are going to show that if $\Rot$, $\Tran$ satisfies \eqref{eq:epipolar} for a generic configuration of $\Line_1$, $\Line_2$, $\Line'_1$, $\Line'_2$, then \eqref{eq:homography_calibrated} holds.

\begin{proposition}
    Let $\Line_1$, $\Line_2$, $\Line'_1$, $\Line'_2$ be a generic configuration of line projections. If $\Rot$, $\Tran$ satisfies \eqref{eq:epipolar} for a generic configuration of $\Line_1$, $\Line_2$, $\Line'_1$, $\Line'_2$, then \eqref{eq:homography_calibrated} holds.
\end{proposition}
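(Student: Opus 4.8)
The plan is to show that the scalar equation \eqref{eq:epipolar} is \emph{exactly} the solvability condition for the plane normal $\Normline$ that appears in \eqref{eq:homography_calibrated}, so that the junction constraint holding is equivalent to a valid $\Normline$ existing. First I would rewrite \eqref{eq:homography_calibrated} as the requirement that there exist $\Normline \in \RR^3$ with $\Line_i \sim (\Rot^\text{T} - \Normline \Tran^\text{T}) \Line'_i$ for $i \in \{1,2\}$. Expanding the action on each line as $(\Rot^\text{T} - \Normline \Tran^\text{T}) \Line'_i = \Rot^\text{T} \Line'_i - (\Tran^\text{T} \Line'_i)\,\Normline$, the proportionality condition for index $i$ says that this vector is a scalar multiple of $\Line_i$, i.e.\ that $\Normline$ lies on the affine line
\begin{equation*}
    \mathcal{L}_i = \frac{\Rot^\text{T} \Line'_i}{\Tran^\text{T} \Line'_i} + \operatorname{span}(\Line_i),
\end{equation*}
where $\Tran^\text{T} \Line'_i \neq 0$ for a generic configuration.

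Next I would observe that a \emph{common} normal exists if and only if the two affine lines $\mathcal{L}_1$ and $\mathcal{L}_2$ in $\RR^3$ intersect. Their directions $\Line_1, \Line_2$ are generically linearly independent (otherwise the two image lines would be parallel and share no junction), so $\mathcal{L}_1, \mathcal{L}_2$ are non-parallel; two non-parallel lines in $\RR^3$ meet precisely when they are coplanar. Writing $\mathbf{p}_i = \Rot^\text{T} \Line'_i / (\Tran^\text{T} \Line'_i)$ for the base points, this coplanarity condition is
\begin{equation*}
    (\mathbf{p}_2 - \mathbf{p}_1)^\text{T} (\Line_1 \times \Line_2) = 0 .
\end{equation*}

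The key algebraic step is then to identify this single scalar equation with \eqref{eq:epipolar}. Clearing denominators by the generically nonzero factor $(\Tran^\text{T} \Line'_1)(\Tran^\text{T} \Line'_2)$ and writing $\mathbf{w} = \Rot(\Line_1 \times \Line_2)$, the coplanarity condition becomes $(\Tran^\text{T} \Line'_1)(\Line'^{\text{T}}_2 \mathbf{w}) - (\Tran^\text{T} \Line'_2)(\Line'^{\text{T}}_1 \mathbf{w}) = 0$. Applying the scalar quadruple-product identity (the Binet--Cauchy identity) $(\mathbf{a} \times \mathbf{b}) \cdot (\mathbf{c} \times \mathbf{d}) = (\mathbf{a} \cdot \mathbf{c})(\mathbf{b} \cdot \mathbf{d}) - (\mathbf{a} \cdot \mathbf{d})(\mathbf{b} \cdot \mathbf{c})$ with $\mathbf{a} = \Line'_1$, $\mathbf{b} = \Line'_2$, $\mathbf{c} = \Tran$, $\mathbf{d} = \mathbf{w}$, this left-hand side is exactly $(\Line'_1 \times \Line'_2)^\text{T} [\Tran]_{\times} \Rot (\Line_1 \times \Line_2)$, which is \eqref{eq:epipolar}. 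Since the two expressions differ only by the nonzero factor $(\Tran^\text{T} \Line'_1)(\Tran^\text{T} \Line'_2)$, the hypothesis that \eqref{eq:epipolar} holds forces $\mathcal{L}_1$ and $\mathcal{L}_2$ to be coplanar, hence to intersect in a unique point, yielding a normal $\Normline$ for which \eqref{eq:homography_calibrated} holds.

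The expansion and the vector identity are routine; I expect the only real care to lie in the genericity bookkeeping. Specifically, I would need $\Tran^\text{T} \Line'_i \neq 0$ so that each $\mathcal{L}_i$ is well defined (this fails only when the epipole lies on $\Line'_i$), and $\Line_1 \not\parallel \Line_2$ so that coplanarity of $\mathcal{L}_1, \mathcal{L}_2$ upgrades to a genuine intersection rather than the parallel-line degeneracy. Handling these excluded measure-zero cases, and confirming that the unique intersection point gives the normal of the plane through the two 3D lines, is the delicate part; the algebraic equivalence itself is immediate from the identity above.
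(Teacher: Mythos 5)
Your proof is correct, but it takes a genuinely different route from the paper. The paper argues geometrically: given that \eqref{eq:epipolar} holds, the junction correspondence $(\Line_1 \times \Line_2,\ \Line'_1 \times \Line'_2)$ can be triangulated to a 3D point $\Worldpoint$; since $\Worldpoint$ lies on the backprojection planes of $\Line_1$ and $\Line'_1$, whose intersection is (generically) the unique triangulated 3D line $\Worldline_1$, and likewise for $\Worldline_2$, the two 3D lines share the point $\Worldpoint$, are therefore coplanar, and so their projections are related by a common homography. You instead argue algebraically: you parametrize, for each line correspondence, the set of admissible normals $\Normline$ in \eqref{eq:homography_calibrated} as an affine line $\mathcal{L}_i = \Rot^\text{T}\Line'_i/(\Tran^\text{T}\Line'_i) + \operatorname{span}(\Line_i)$ in $\RR^3$, and show via the Binet--Cauchy identity that the coplanarity (hence intersection) condition for $\mathcal{L}_1, \mathcal{L}_2$ equals the left-hand side of \eqref{eq:epipolar} up to the generically nonzero factor $(\Tran^\text{T}\Line'_1)(\Tran^\text{T}\Line'_2)$. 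Both are valid; your approach buys two things the paper's does not: the identity is an equivalence, so it establishes the converse direction (the paper's Proposition 1) simultaneously rather than by a separate computation, and it makes the genericity assumptions explicit and checkable ($\Tran^\text{T}\Line'_i \neq 0$, i.e.\ the epipole off the image lines, and $\Line_1 \not\parallel \Line_2$). The paper's geometric argument is shorter and more transparent about what the configuration means in 3D, and it separately covers the degenerate case where the junctions are vanishing points (parallel 3D lines are still coplanar), which in your parametrization is hidden in the excluded measure-zero set. One small remark: your closing worry about ``confirming that the intersection point gives the normal of the plane through the two 3D lines'' is unnecessary for the statement as written --- \eqref{eq:homography_calibrated} only asserts the existence of some $\Normline$ satisfying the two proportionalities, which your intersection point provides directly.
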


\begin{proof}
Let $\Rot$, $\Tran$ be a relative pose satisfying \eqref{eq:epipolar}. Then, there is a point $\Worldpoint \in \RR^3$ that is projected by $\Proj_1$ to the intersection $\Line_1 \times \Line_2$, and by $\Proj_2$ to the intersection $\Line'_1 \times \Line'_2$ \cite{DBLP:books/cu/HZ2004}. Let $\Plane_1$ be the plane obtained by backprojecting $\Line_1$. Then, $\Worldpoint$ lies on $\Plane_1$. Similarly, $\Worldpoint$ lies on $\Plane'_1$ obtained by backprojecting $\Line'_1$. Therefore, $\Worldpoint$ lies on the intersection $\Plane_1 \cap \Plane'_1$ of $\Plane_1$, $\Plane'_1$. Since we assume a generic configuration, the intersection $\Plane_1 \cap \Plane'_1$ is a unique line \cite{DBLP:books/cu/HZ2004}, which equals to $\Worldline_1$. Therefore, $\Worldpoint$ lies on $\Worldline_1$. We use the same argument to show that $\Worldpoint$ lies on $\Worldline_2$. Therefore, lines $\Worldline_1$, $\Worldline_2$ are coplanar, and their projections are related by homography \cite{DBLP:books/cu/HZ2004}.
\end{proof}

For the sake of completeness, we also show the implication for the degenerate configurations of lines. In the degenerate configuration, the junctions $\Line_1 \times \Line_2$, $\Line'_1 \times \Line'_2$ are the vanishing points. Then, the lines $\Worldline_1$, $\Worldline_2$ are parallel, and therefore, they are coplanar.

\subsubsection{Junction $\iff$ Coplanarity}\label{sec:junction_equivalence}
Here, we combine the results from Section \ref{sec:direction1}, and from Section \ref{sec:direction2} to conclude the proof.

\begin{proposition}
Let $\Line_1$, $\Line_2$, $\Line'_1$, $\Line'_2$ be a set of generic line projections. Then, the constraint \eqref{eq:homography_calibrated} imposed on the relative pose $\Rot$, $\Tran$ by the coplanarity of the lines is equivalent to the epipolar constraint \eqref{eq:epipolar} imposed by the junction pair $(\Line_1 \times \Line_2, \Line'_1 \times \Line'_2)$.
\end{proposition}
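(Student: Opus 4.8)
The plan is to establish the final Proposition as an immediate corollary of the two directional results already proved in Sections~\ref{sec:direction1} and \ref{sec:direction2}. The claim is a statement of set equality: for generic $\Line_1$, $\Line_2$, $\Line'_1$, $\Line'_2$, the set of relative poses $(\Rot, \Tran)$ satisfying the coplanarity/homography constraint \eqref{eq:homography_calibrated} coincides with the set satisfying the junction epipolar constraint \eqref{eq:epipolar}. Since a set equality $A = B$ is equivalent to the two inclusions $A \subseteq B$ and $B \subseteq A$, I would prove the proposition simply by invoking both implications that were separately established.

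Concretely, I would argue as follows. Fix a generic line configuration $\Line_1$, $\Line_2$, $\Line'_1$, $\Line'_2$. Suppose $(\Rot, \Tran)$ satisfies the coplanarity constraint \eqref{eq:homography_calibrated}. Then, by the Proposition of Section~\ref{sec:direction1} (Coplanarity $\implies$ Junction), the epipolar constraint \eqref{eq:epipolar} holds for the junction pair $(\Line_1 \times \Line_2, \Line'_1 \times \Line'_2)$; this gives one inclusion. Conversely, suppose $(\Rot, \Tran)$ satisfies the junction epipolar constraint \eqref{eq:epipolar}. Then, by the Proposition of Section~\ref{sec:direction2} (Junction $\implies$ Coplanarity), the lines $\Worldline_1$, $\Worldline_2$ obtained by backprojection are coplanar, and hence their projections are related by a common homography of the form \eqref{eq:homography_calibrated}; this gives the reverse inclusion. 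Combining the two inclusions yields the desired equivalence, and the proof is complete.

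The only subtlety worth flagging explicitly is the role of the genericity hypothesis, which is needed in both directions and should be carried through uniformly. In the Junction $\implies$ Coplanarity direction, genericity guarantees that the backprojected planes $\Plane_1$ and $\Plane'_1$ meet in a \emph{unique} line rather than coinciding, so that $\Worldpoint$ is pinned to $\Worldline_1$ (and analogously to $\Worldline_2$); the degenerate case where the junctions are vanishing points is handled by the remark at the end of Section~\ref{sec:direction2}, where parallel lines are still coplanar. In the Coplanarity $\implies$ Junction direction, the cross-product manipulation and the cancellation of the $[\Tran]_\times \Tran \Normline^\text{T}$ term hold for any valid pose, so genericity is used there mainly to ensure the homography normal $\Normline$ is well defined. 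I do not anticipate a real obstacle here, since both hard directions are already discharged; the main thing to be careful about is stating the proof as a clean two-inclusion argument and ensuring the genericity assumption is consistent with the two lemmas being cited, rather than re-deriving any of the algebra.
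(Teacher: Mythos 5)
Your proposal is correct and follows exactly the paper's own proof: the paper likewise establishes the equivalence by simply combining the two directional results of Sections~\ref{sec:direction1} and \ref{sec:direction2}, with no additional argument. Your extra remarks on tracking the genericity hypothesis are consistent with (though more explicit than) what the paper states.
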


\begin{proof}
We have shown in Section \ref{sec:direction1} that if the relative pose $\Rot$, $\Tran$ satisfies constraint \eqref{eq:homography_calibrated}, then it satisfies constraint \eqref{eq:epipolar}. In Section \ref{sec:direction2}, we have shown that if the relative pose $\Rot$, $\Tran$ satisfies \eqref{eq:epipolar}, then it satisfies \eqref{eq:homography_calibrated}. Therefore, the constraints imposed on the relative pose $\Rot$, $\Tran$ by \eqref{eq:epipolar}, and by \eqref{eq:homography_calibrated} are equivalent.
\end{proof}

\subsection{Alternative Solver Formulation Based on Coplanar Lines}
Here, we give an alternative way to eliminate $\Normline$ from system \eqref{eq:homography_calibrated}, and propose an alternative formulation of solvers 3-0-1, and 2-0-2 that use this constraint. We compare these alternative solvers with the proposed ones.

Constraint \eqref{eq:homography_calibrated} gives $4$ constraints that are linear in the elements of $\Normline$. Therefore, we can write the constraints as:
\begin{equation*}
    \matr{A}(\Rot, \Tran) \begin{bmatrix}
        \Normline \\ 1
    \end{bmatrix} = 0,
\end{equation*}
where $\matr{A}(\Rot, \Tran) \in \RR^{4,4}$ is matrix whose elements are functions of the relative pose $\Rot$, $\Tran$. We eliminate $\Normline$ to get:
\begin{equation}
    \det ( \matr{A}(\Rot, \Tran) ) = 0,\label{eq:determinant}
\end{equation}
which gives one constraint on the relative pose $\Rot$, $\Tran$. We tried to use the constraints in this form (instead of epipolar constraint on junctions) to solve the 3-0-1 and 2-0-2 problems from the main paper. Namely, in the 3-0-1 case, we have a rotation in the form ${\Rot'}_{\mathbf{x}}^\text{T} \Rot_y(\varphi) \Rot_{\mathbf{x}}$, and we need 3 constraints in the form \eqref{eq:determinant} to solve for $\varphi$, $\Tran$. In the 2-0-2 case, the rotation $\Rot$ is fixed by the vanishing points, and we need 2 constraints \eqref{eq:determinant} to solve for $\Tran$. In both cases, we employ the automatic minimal solver generator \cite{DBLP:journals/corr/abs-2004-11765} to get the minimal solvers. Table \ref{tab:time_comparison} shows the time comparison of this approach with the proposed junction-based solvers.  Figure \ref{fig:coplanar_stability_tests} shows the numerical stability comparison. We can see that this alternative approach is inferior compared to the proposed ones in terms of both stability and time.

\begin{table}[]
    \centering
    \begin{tabular}{c|c|c}
        & Junctions \eqref{eq:epipolar} & Alternative \eqref{eq:determinant} \\
        \hline
        3-0-1 &\textbf{ 8.67859 $\mu s$} & 1633.69 $\mu s$ \\
        2-0-2 & \textbf{0.13719 $\mu s$} & 24.6418 $\mu s$
    \end{tabular}
    \caption{Average time in $\mu s$ of minimal solvers using the junctions \eqref{eq:epipolar} vs. the alternative approach \eqref{eq:determinant}.}
    \label{tab:time_comparison}
\end{table}

\begin{figure}
    \centering
    \includegraphics[width=0.7\linewidth]{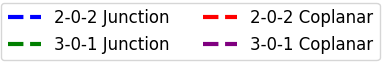}
       \includegraphics[width=0.95\linewidth,trim={0 3mm 0 0},clip]{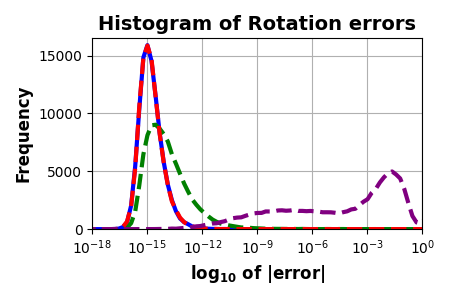}
       \includegraphics[width=0.95\linewidth,trim={0 3mm 0 0},clip]{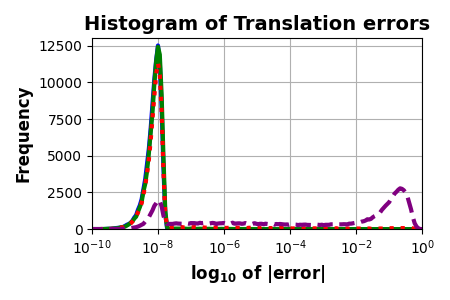}
    \caption{\textbf{Histogram of $\log_{10}$ pose errors} in radians of minimal solvers 3-0-1, and 2-0-2 computed from $100000$ noiseless samples. We compare the Junctions \eqref{eq:epipolar} and Coplanarity \eqref{eq:determinant} formulations. }
    \label{fig:coplanar_stability_tests}
\end{figure}


\section{Details on Experiments}
\subsection{Sampling of Synthetic Scenes}
Here, we describe in detail, how we sample the synthetic scenes, which we then use in the numerical stability and noise robustness tests.

\textbf{Relative pose and point correspondences (PC).} First, we generate a random rotation matrix $\Rot_\text{GT}$, and a camera center $\textbf{C}_\text{GT}$ from a Gaussian distribution with zero mean and unit standard deviation. 
We calculate translation $\Tran_\text{GT} = -\Rot_\text{GT} \textbf{C}_\text{GT}$.
To generate a PC, we sample a point $\matr{X} \in \RR^{3}$ from a Gaussian distribution with mean $[0 \ 0 \ 5]^\text{T}$, and standard deviation $1$. Then, we project the point to the first camera as $\matr{p} = \matr{X}$ and into the second one as $\matr{q} = \Rot_\text{GT} \matr{X}_{jA} + \Tran_\text{GT}$.

\textbf{Line correspondences (LC).} To generate a LC in direction $\Dir$, we sample a 3D point $\matr{X}_A$ and a parameter $\lambda \in \RR$. We construct the second point as $\matr{X}_B = \matr{X}_A + \lambda \Dir$. 
Then, we get the projections $\matr{p}_A$, $\matr{p}_B$ of both endpoints in the first camera, and $\matr{q}_A$, $\matr{q}_B$ in the second one. Then, we obtain the homogeneous coordinates of the line projections as $\Line = \matr{p}_{A} \times \matr{p}_{B}$, $\Line' = \matr{q}_{A} \times \matr{q}_{B}$.

\textbf{Vanishing points} To generate vanishing point correspondence $(\Vanishing_i, \Vanishing'_i)$, we first sample a direction $\Dir_i$. In the \textbf{numerical stability} tests, we generate two line correspondences $(\Line_1, \Line'_1)$, $(\Line_2, \Line'_2)$ in direction $\Dir_i$ according to the previous paragraph, and obtain vanishing points $\Vanishing_i$ and $\Vanishing'_i$ as the intersections of the projected 2D lines: $\Vanishing_i = \Line_1 \times \Line_2$, $\Vanishing'_i = \Line'_1 \times \Line'_2$. In the \textbf{noise robustness tests}, we generate $l \geq 3$ line correspondences $(\Line_j, \Line'_j), j \in \{1,...,l\}$ in direction $\Dir_i$, and add the noise $\frac{\sigma}{f}$ to the endpoints $\matr{p}_{A} \times \matr{p}_{B}$, $\matr{m} = \matr{q}_{A}$. Then, we construct matrix $\matr{A} \in \RR^{l,3}$ with rows $\Line_j, j \in \{1,...,l\}$, and find $\Vanishing_i$ as the least-squares solution to system $\matr{A} \Vanishing_i = 0$. We find vanishing point $\Vanishing'_i$ analogously from lines $\Line'_j, j \in \{1,...,l\}$.

\textbf{Line orthogonal to VP.} To generate a line orthogonal to a VP in direction $\Dir_i$, we sample a random direction $\Dir_0$, get direction $\Dir = \Dir_i \times \Dir_0$ orthogonal to $\Dir_i$, and sample a LC in direction $\Dir$.

\textbf{Tuple of coplanar lines.} To generate a tuple of $k$ coplanar lines, we first generate three points $\matr{X}_1, \matr{X}_2, \matr{X}_3 \in \RR^3$, and for every line, we sample 4 parameters $\lambda_1, ..., \lambda_4 \in \RR$ from a normalized Gaussian distribution. Then, we get the endpoints of the line as $\matr{X}_A = \matr{X}_1 + \lambda_1 \matr{X}_2 + \lambda_2 \matr{X}_3$, $\matr{X}_B = \matr{X}_1 + \lambda_3 \matr{X}_2 + \lambda_4 \matr{X}_3$. Then, we project the endpoints into both cameras and join them to get a line correspondence.

For every solver, we generate the entities that are needed to compute the pose.

\subsection{Additional Synthetic Tests}
Here, we give additional synthetic tests to evaluate the solvers 2-1-1$^{\perp}$, 1-2-1$^{\perp}$, and 2-0-1$^{\perp}$ from the main paper. These solvers assume that the angle, between the directions $\Dir_i$ of the vanishing point and $\Dir$ of the line, is $90^{\circ}$. We perturb this angle and measure the error of these solvers. The result are shown in Figure~\ref{fig:noise_orthogonality_test}. We consider both the case without local optimization, and after the local optimization is applied. The plot shows that the solvers 2-1-1$^{\perp}$, 1-2-1$^{\perp}$, and 2-0-1$^{\perp}$ are very robust to the deviation from the orthogonal direction, especially if they are combined with the local optimization. Even with deviation $10^{\circ}$, the average rotation error of the solvers does not exceed $1.1^{\circ}$, and the average translation error reaches about $4^{\circ}$ for 2-1-1$^{\perp}$, and $6^{\circ}$ for 1-2-1$^{\perp}$ and 2-0-1$^{\perp}$.

%

\begin{figure}
    \centering
    \setlength\tabcolsep{0pt}
    \setlength\extrarowheight{-3pt}
    \renewcommand{\arraystretch}{0}
    \includegraphics[width=0.48\linewidth]{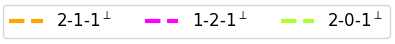}
    \begin{tabular}{c c}
       \includegraphics[width=0.43\linewidth]{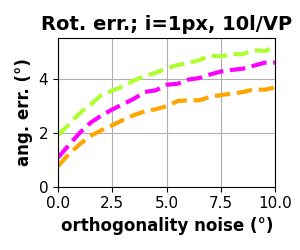}
       &
       \includegraphics[width=0.43\linewidth]{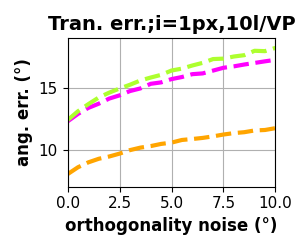}
       \\
       \includegraphics[width=0.45\linewidth]{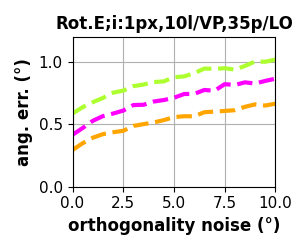}
       &
       \includegraphics[width=0.45\linewidth]{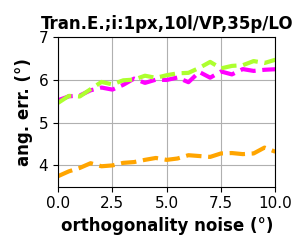}
       \\
    \end{tabular}
    
    \caption{
    \textbf{Angular error in deg. of solvers 2-1-1$^{\perp}$ and 1-2-1$^{\perp}$ \textit{top}: without local optimization, and \textit{bottom}: with local optimization.} Averaged over 25000 runs, as a function of deviation from the orthogonal direction. 
    Image noise (i), line number per VP (l/VP), and the number of points used in LO (p/LO) are in the titles.
    }
    \label{fig:noise_orthogonality_test}
\end{figure}

{\small
\bibliographystyle{ieeenat_fullname}
\bibliography{arxiv}
}

\end{document}